\documentclass[twoside]{article}

\usepackage[accepted]{aistats2026}
% If your paper is accepted, change the options for the package
% aistats2026 as follows:
%
%\usepackage[accepted]{aistats2026}
%
% This option will print headings for the title of your paper and
% headings for the authors names, plus a copyright note at the end of
% the first column of the first page.

% We also include a `preprint' option for non-anonymous preprints. 
% Change the options for the package aistats2026 as follows:
%
%\usepackage[preprint]{aistats2026}
%
% This option will print headings for the title of your paper and
% headings for the authors names, but does not print the copyright and 
% venue note at the end of the first column of the first page.

% If you set papersize explicitly, activate the following three lines:
%\special{papersize = 8.5in, 11in}
%\setlength{\pdfpageheight}{11in}
%\setlength{\pdfpagewidth}{8.5in}

% If you use the natbib package, activate the following three lines:
\usepackage[round]{natbib}

%\renewcommand{\bibsection}{\subsubsection*{\bibname}}
%%%%%%%

% If you use BibTeX in apalike style, activate the following line:
%\bibliographystyle{apalike}

%%%%%%%%%%%%%%%%%%%our macros
\usepackage{graphicx} % Required for inserting images
\usepackage{amsmath}

\usepackage[utf8]{inputenc} % allow utf-8 input

\usepackage[english]{babel}

\usepackage[T1]{fontenc}    % use 8-bit T1 fonts
\usepackage{hyperref}       % hyperlinks
\usepackage{url}            % simple URL typesetting
\usepackage{booktabs}       % professional-quality tables
\usepackage{amsfonts}       % blackboard math symbols
\usepackage{nicefrac}       % compact symbols for 1/2, etc.
\usepackage{microtype}      % microtypography
\usepackage{xcolor}         % colors
\usepackage{amsmath}

\usepackage{dsfont}
\usepackage{amsfonts}
\usepackage{bbold}
\usepackage{dirtytalk}
\usepackage{appendix}

\usepackage[textheight=630pt,
  textwidth=468pt,
  centering]{geometry}

% MACRO

   % tensor space (TSpace)
    % collection of kernels space (hSpace)
  % set of the indices in a tensors (iSet)
      % index in a tensor
      % index in a tensor
      % collection of kernels
      % other collection of kernels
      % other collection of kernels
  % element of the canonical basis in \hS   (hBasis)
 
      % collection of kernels
      % collection of kernels
      % collection of kernels
      % collection of kernels
      % collection of kernels
      % collection of kernels
      % collection of kernels
      % collection of kernels
      % collection of kernels
      % collection of kernels
      % collection of kernels

%%% Commented the following lines for jmlr style to work

%\usepackage{amsthm}

\usepackage{amsthm}

\newtheorem{theorem}{Theorem}

\newtheorem{lemma}[theorem]{Lemma}
\newtheorem{proposition}[theorem]{Proposition}
\newtheorem{corollary}[theorem]{Corollary}

\theoremstyle{definition}
\newtheorem{example}[theorem]{Example}

\theoremstyle{remark}
\newtheorem{observation}[theorem]{Observation}
\newtheorem{remark}[theorem]{Remark}

\begin{document}

% If your paper is accepted and the title of your paper is very long,
% the style will print as headings an error message. Use the following
% command to supply a shorter title of your paper so that it can be
% used as headings.
%
%\runningtitle{I use this title instead because the last one was very long}

% If your paper is accepted and the number of authors is large, the
% style will print as headings an error message. Use the following
% command to supply a shorter version of the author names so that
% they can be used as headings (for example, use only the surnames)
%
%\runningauthor{Surname 1, Surname 2, Surname 3, ...., Surname n}

\twocolumn[

\aistatstitle{The Riemannian Geometry Associated to Gradient Flows of Linear Convolutional Networks}

\aistatsauthor{ El Mehdi Achour \And Kathlén Kohn \And  Holger Rauhut }

\aistatsaddress{ UM6P College of Computing \\ Rabat, Morocco \And  KTH Royal Inst. of Technology \\ \& Digital Futures, Stockholm, Sweden \And LMU Dept. of Math \& MCML \\ Munich, Germany } ]

\begin{abstract}
  We study geometric properties of the gradient flow for learning deep linear convolutional networks. For linear fully connected networks, it has been shown recently that the corresponding gradient flow on parameter space can be written as a Riemannian gradient flow on function space (i.e., on the product of weight matrices) if the initialization satisfies a so-called balancedness condition. We establish that the gradient flow on parameter space for learning linear convolutional networks can be written as a Riemannian gradient flow on function space regardless of the initialization. This result holds for $D$-dimensional convolutions with $D \geq 2$, and for $D =1$ it holds if all so-called strides of the convolutions are greater than one. The corresponding Riemannian metric depends on the initialization. 
  Our results also apply to shallow ReLU networks.
\end{abstract}

\section{INTRODUCTION}
\label{sec:intro}

Convergence properties of (stochastic) gradient descent schemes for learning deep neural networks are challenging to understand mainly due to the nonconvexity of the corresponding loss functions. In recent years progress could be achieved in specific simplified settings 
\cite{arora2018optimization,ntk,bah2022learning,kohn2022geometry}. One line of work considers deep linear  networks \cite{arora2018optimization,bah2022learning,kohn2022geometry,kohn2024function,ngraul24}. While linear  networks represent linear functions so that their expressivity is  limited, convergence properties of training algorithms are still highly non-trivial and thus interesting to investigate. Such investigations should be seen as a first step before passing to networks with nonlinear activation, as linear networks are common building blocks of nonlinear architectures (e.g., of ReLU networks; see Sec. \ref{sec:relu}). For fully connected linear networks and the square loss it was shown in \cite{bah2022learning} that (essentially) the corresponding gradient flow converges to a global minimizer for almost all initializations; this result was extended to gradient descent under a suitable upper bound on the step sizes \cite{ngraul24}. Additionally, \cite{bah2022learning} revealed that under so-called balanced initialization, the flow of the  network, i.e., the product of the weight matrices, is independent of its parametrization and follows a Riemannian gradient flow with respect to a suitable Riemannian metric. Expressed differently, the so-called neural tangent kernel (determining the Riemannian metric) is independent of the parameterization. This is a weaker property of the neural tangent kernel (NTK) than the well-known one that in the limit of infinite width the NTK becomes constant \cite{ntk} (for random initialization) -- of course, it is then in particular also independent of the parameterization. But in contrast, the mentioned result of \cite{arora2018optimization,bah2022learning} holds for any finite-width network.

In this paper, we study similar geometric and convergence properties for linear \textbf{convolutional} neural networks, i.e., we pass from the fully connected case to certain structured neural networks. More precisely, we study the NTK's sole dependence on the network as a function (instead of its parameters).
Each layer in such a network is a convolution on $D$-dimensional signals that is given by its filter $w_l$, which is a $D$-dimensional tensor, and its stride $s_l \in \mathbb{Z}_{>0}^D$.
Given a $D$-dimensional input tensor, the convolution computes the inner product of various parts of the input tensor with the filter $w_l$, by moving the filter through the input tensor with step size $s_{i,d}$ in the $d$-th direction. 
For a formal definition, see Sections~\ref{sec:oneDim} and~\ref{sec:higherDim}.
For a linear convolutional network with $H$ layers, the following algebraic scalars stay constant throughout gradient flow \cite[Prop. 5.13]{kohn2022geometry}:
\begin{align}
    \label{eq:InvConvolution}
    \delta_l := \Vert w_{l+1} \Vert_F^2 -  \Vert w_{l} \Vert_F^2, \text{ for } l = 1, \ldots, H-1.
\end{align}
If all $\delta_1, \ldots, \delta_{H-1}$ are zero, we say that the filters $(w_1, \ldots, w_H)$ are balanced in analogy to the fully connected case \cite{arora2018optimization,bah2022learning}.
One of our main findings is that in most cases the corresponding NTK and thereby the associated Riemannian metric is independent of the parameterization of the convolutional network, regardless whether the initialization is balanced or not. The NTK only depends on the $\delta_l$ (at initialization).

\subsection{Gradient flow for fully-connected and convolutional linear networks }
\label{sec:learning-networks}

For a fixed neural network architecture, we denote by $\mu: \Theta \to \mathcal{M}$ the network parametrization map that assigns to parameters $\theta \in \Theta$ the end-to-end function $\mu_\theta:= \mu(\theta)$.
The image of $\mu$ is the neuromanifold $\mathcal{M}$ of the network architecture.

We consider both fully connected and convolutional linear neural networks:
\begin{itemize}
    \item \textbf{Fully connected linear neural networks} take the form 
    \[
    \mu_\theta : \mathbb{R}^{d_0} \to \mathbb{R}^{d_H}, \mu_\theta(X) = W_H\cdots W_1 X,
    \]
    where $W_l \in \mathbb{R}^{d_l \times d_{l-1}}$ and $\theta=(W_1,\hdots,W_H)$, i.e, $\mu_{\theta} = W_H \cdots W_1$. Setting $r = \min\{d_0,\hdots,d_H\}$, the neuromanifold $\mathcal{M}$ is the manifold of $d_H \times d_0$ matrices of rank at most $r$.
    \item \textbf{A convolutional linear neural network} is a composition of linear convolutions. On one-dimensional signals, the convolution in the 
    $l$-th layer is given by a filter $w_l \in \mathbb{R}^{k_l}$ and a stride $s_l \in \mathbb{Z}_{>0}$, $l=1,\hdots,H$.
    It is a linear map $\alpha_{w_l,s_l}: \mathbb{R}^{d_{l-1}} \to \mathbb{R}^{d_l}$ with output dimension $d_l$ and input dimension $d_{l-1} = s_l(d_l-1)+k_l$ that sends a vector $X \in \mathbb{R}^{d_{l-1}}$ to the vector with $i$-th entry for  $i = 0, 1, \ldots, d_l-1:$
    \begin{align}\label{def:conv-one-D} \textstyle
    \left(\alpha_{w_l,s_l}(X) \right)_i = \sum_{j=0}^{k_l-1} w_{l,j} X_{i s_l + j}.
    \end{align}
    The network composes $H$ such convolutions, i.e., $f = \alpha_{w_H,s_H} \circ \cdots \circ \alpha_{w_1,s_1}$, which is again a convolution of stride $s_1 \cdots s_H$ and filter size $k := k_1 + \sum_{l=2}^H (k_{l}-1) \prod_{i=1}^{l-1} s_i$.
    Hence,  the network parametrization map 
    \begin{align*}
    \mu: \mathbb{R}^{k_1} \times \ldots \times \mathbb{R}^{k_H} \to \mathbb{R}^k 
    \end{align*}
    assigns to a filter tuple $(w_1, \ldots, w_H)$ the filter of the end-to-end convolution.

    Convolutions and linear convolutional neural networks on $D$-dimensional signals are defined analogously; see \eqref{eq:higherDimensionalConvolution} below.
\end{itemize}

Given training data $\mathcal{D} = \{  (X_i,Y_i) \in \mathbb{R}^{d_0} \times \mathbb{R}^{d_H} \mid i = 1,2, \ldots, N \}$, one aims at learning a hypothesis function $\alpha: \mathbb{R}^{d_0} \to \mathbb{R}^{d_H}$ (e.g. a neural network) 
such that $\alpha(X_i) \approx Y_i$ for $i = 1,\ldots,N$. Given a (differentiable) loss function $\ell :\mathbb{R}^{d_H} \times \mathbb{R}^{d_H} \to \mathbb{R}$, the empirical risk of  $\alpha$ is defined as
\[ \textstyle
\ell_\mathcal{D}(\alpha) = \sum_{i=1}^N \ell(\alpha(X_i),Y_i). 
\]
Below we will exclusively work with the square loss 
\[
\ell(\alpha(X_i),Y_i) = \|\alpha(X_i) - Y_i\|_2^2.
\]
The empirical square loss in parameter space is then 
\[\textstyle
\mathcal{L}_{\mathcal{D}}(\theta) := \ell_{\mathcal{D}} \circ \mu(\theta) = \sum_{i=1}^N \|\mu_\theta(X_i) - Y_i\|_2^2.
\]
Learning via empirical risk minimization requires minimization of $\mathcal{L}_{\mathcal{D}}$. In practice, one often uses (variants of) gradient descent or stochastic gradient descent. For a mathematical analysis it is convenient to view gradient descent as an Euler discretization of gradient flow, which we will consider here exclusively. Introducing a time dependence $\theta(t), t \in [0,\infty)$ and given an initialization $\theta_0$, the gradient flow is the solution to the ODE
\begin{equation}\label{grad-flow-eq}
\frac{d}{dt}\theta(t) = - \nabla \mathcal{L}_{\mathcal{D}}(\theta(t)), \quad \theta(0) = \theta_0.
\end{equation}
The main interest of this paper is to study geometric properties of this gradient flow as well as convergence properties, in particular, whether the flow converges to a minimum of $\mathcal{L}_{\mathcal{D}}$. The \textbf{NTK} introduced in \cite{ntk} is a useful tool to this end. Denoting by $J_\theta$ the Jacobi matrix of $\mu$ at $\theta$, the evolution in function space, i.e., the evolution of $\mu(\theta(t))$ can be written as
\[
\frac{d}{dt} \mu(\theta(t)) = - J_{\theta(t)} J_{\theta(t)}^\top \nabla \ell_{\mathcal{D}}(\mu(\theta(t))), 
\]
where $\nabla \ell_\mathcal{D}(\alpha)$ denotes the (functional) gradient of $\ell_{\mathcal{D}}$ with respect to $\alpha$. The above equation motivates to introduce the NTK as
\[
K_\theta := J_\theta J_\theta^\top. 
\]

In  case of fully connected linear networks where $\mu(\theta) = W_N\cdots W_1$, the gradient flow equation reads
\begin{equation}\label{grad-flow-fully-connected}
\frac{d}{dt} W_j(t) = - \nabla_{W_j} \mathcal{L}_{\mathcal{D}}(W_1(t),\hdots,W_N(t)), 
\end{equation}
for $ j=1,\hdots,H$. It is shown in \cite{arora2018optimization} that  for all $i = 1, \ldots, H-1$,
\begin{align}
\label{eq:balancedFullyConnected}
   \Delta_i := \Delta_i(t) = W_{i+1}^\top(t) W_{i+1}(t) - W_{i}(t)W_i^\top(t)
\end{align}
is constant in $t$, i.e., $\Delta_i(t) = \Delta_i(0)$ for all $t \geq 0$ and all $i=1,\hdots,H-1$. We refer to \cite{marcotte2024abide} for more information on such invariants.
In function space, the corresponding evolution is  
\[
\mu(\theta(t)) = W(t) = W_H(t) \cdots W_1(t).
\]
For balanced initialization, i.e., $\Delta_i(0) = 0$ for all $i=1,\hdots,H-1$, the evolution of $W(t)$ only depends on $W(t)$ itself
but not on the individual factors $W_l(t)$. More precisely, it is shown in \cite{arora2018optimization} that in this case $W(t)$ satisfies the ODE
\begin{align}\label{product-flow}
&\frac{d}{dt} W(t) = \nonumber \\
& - \sum_{j=1}^H (W(t)W(t)^T)^{\frac{H-j}{H}} \nabla \mathcal{L}_{\mathcal{D}}(W(t)) (W(t)^TW(t))^{\frac{j-1}{H}} \nonumber \\
&=: -\mathcal{A}_W(t)\big( \nabla \mathcal{L}_{\mathcal{D}}(W(t))\big).
\end{align}
It is shown in \cite{bah2022learning} that the restriction $\bar{\mathcal{A}}_W$ of $\mathcal{A}_W$ to the tangent space $T_W(\mathcal{M}_r)$ of the manifold $\mathcal{M}_r$ of $d_H \times d_0$ matrices of rank exactly $r$ is invertible. The bilinear form
\[
g_W(Y,Z) = \langle \bar{\mathcal{A}}_W^{-1} Y, Z\rangle, \quad Y,Z \in T_W(\mathcal{M}_r)
\]
generates a Riemannian metric of class $C^1$ on $\mathcal{M}_r$ \cite[Propositions 10 and 11]{bah2022learning}. Moreover, with the associated Riemannian gradient $\nabla^g$ we can write the evolution \eqref{product-flow} of $W(t)$ (in case of balanced initialization, i.e., $\Delta_i(0) = 0$, $i=1,\hdots, H-1$) as Riemannian gradient flow (see also \cite{bah2022learning}):
\[
\frac{d}{dt}W(t) = - \nabla^g \mathcal{L}_{\mathcal{D}}(W(t)).
\]
The NTK at $\theta = (W_1,\hdots,W_H)$ (not necessarily balanced) is given by its action on $Z \in \mathbb{R}^{d_H \times d_H}$ via (see, e.g., \cite[Lemma 2]{bah2022learning}
\begin{align}
\label{NTK-fully-general} \textstyle
K_\theta(Z) = \sum_{l=1}^H  & W_H \cdots W_{l+1} W_{l+1}^\top \cdots W_H^\top \, Z\, \nonumber \\ & W_1^\top \cdots W_{l-1}^\top W_{l-1} \cdots W_1.
\end{align}
If $\theta=(W_1,\hdots,W_H)$ is balanced, i.e., $\Delta_i = 0$, $i=1,\hdots,H-1$, and $W=W_H \cdots W_1$ then 
\[
K_\theta(Z) = \mathcal{A}_{W}(Z);
\]
in particular, the NTK is independent of the parameterization of $W = \mu(\theta)$, i.e., independent of the individual factors $W_l$ in the product $W= W_H \cdots W_1$. Moreover, the Riemannian metric $g_W$ on $\mathcal{M}_k$ is given by the quadratic form induced by $K_\theta^{-1}$ (where the inverse is taken for the restriction on the tangent space $T_W(\mathcal{M}_k)$).

It is shown in \cite{bah2022learning} that for almost all initializations, the gradient flow \eqref{grad-flow-fully-connected} converges to a global minimum of $\ell_{\mathcal{D}}(\alpha)$ restricted to the manifold
$\mathcal{M}_k$ of rank $k$-matrices where $k \leq r := \min\{d_l, l=0,\hdots,H\}$. For $k = r$, this corresponds to the global minimum of $\mathcal{L}_{\mathcal{D}}(\theta)$, but it is currently open whether $k=r$ holds for almost all initializations.

For linear convolutional networks, where $\theta = (w_1,\hdots,w_H)$ and $\mu(\theta)$ is the corresponding end-to-end convolutional filter (taking into account the strides $s_1,\hdots,s_H$), we will study the parameter-independence of NTKs; see Section \ref{sec:mainContr}. Convergence of the corresponding gradient flow for learning linear convolutional neural networks on one-dimensional signals, i.e., $D=1$, is studied in \cite{dirati25}. It is shown that for certain loss functions, including the square loss, the gradient flow \eqref{grad-flow-eq} converges to a critical point of $\mathcal{L}_{\mathcal{D}}$.

\subsection{Main contributions} \label{sec:mainContr}
We prove the following:
\begin{itemize}
    \item For linear convolutional networks on $D$-dimensional signals where $D \geq 2$: For any fixed   $\delta = (\delta_1, \ldots, \delta_{H-1}) \in \mathbb{R}^{H-1}$ and any end-to-end function $v$ in the smooth locus of the neuromanifold $\mathcal{M}$, there are finitely many parameters $\theta = (w_1, \ldots, w_H)$ in $\mu^{-1}(v)$ that satisfy \eqref{eq:InvConvolution}; they all have the same NTK $K_\theta$.  
    Hence, the NTK only depends on the end-to-end function $v$ and the invariants $\delta$.
    \begin{itemize}
        \item For every fixed $\delta$, the NTKs are a Riemannian metric on the smooth locus of $\mathcal{M}$.
        \item For fixed $\delta$, the NTK is generally parameter dependent on the singular locus of $\mathcal{M}$.
    \end{itemize}
    \item For linear convolutional networks on one-dimensional signals where the strides $s_1, \ldots, s_{H-1} $ are larger than $1$:
    All results from the previous bullet points still hold.
    \begin{itemize}
        \item It is known that for enough and sufficiently generic training data, every critical point $\theta$ of the squared-error loss is either the zero function $\mu(\theta)=0$, or $\theta$ is a regular point of $\mu$ and its image $\mu(\theta)$ is in the smooth locus of $\mathcal{M}$ \cite[Theorem~2.12]{kohn2024function}.
        We show that gradient flow with generic initialization does not converge to the zero function. Note that the zero function corresponds to a saddle point of $\mathcal{L}_{\mathcal{D}}$ (unless all $Y_i$ are $0$). As noted above, it has been shown in \cite{dirati25} that (for convolutional networks on one-dimensional signals)  gradient flow always converges to a critical point. This means that altogether
        we can say that we have a metric on the smooth part of $\mathcal{M}$ with convergence to a point in that smooth part, and that this point is a critical point of the loss in both parameter and function space.
    \end{itemize}
    \item For linear convolutional networks on one-dimensional signals where some strides are one: For fixed $\delta$, the NTK is generally parameter dependent.
    \item For fully-connected linear networks, the NTK is generally parameter dependent. While this may intuitively be clear given the available results, this has not been rigorously shown before. In this paper, we therefore provide formal proofs of the following facts.
\begin{itemize}
    \item For generic $\Delta = (\Delta_1, \ldots, \Delta_{H-1}) $, the NTK is generally parameter dependent.
    \item In the balanced case (i.e., all $\Delta_i=0$), we explain the underlying geometry for why the NTK only depends on the function $V$ at hand: The balanced parameters in $\mu^{-1}(V)$ are related by orthogonal matrices, which are isometries of the ambient Euclidean parameter space. %\textcolor{blue}{make clearer}
\end{itemize}
\item Our results also apply to shallow ReLU networks; see Section \ref{sec:relu}.
\end{itemize}

We summarize the key difference between the studied architectures in Table~\ref{tab:table}.

\begin{table}[h]
\begin{tabular}{ll}
\textbf{Linear network}  &\textbf{Is NTK parameter} \\
\textbf{architecture}  &\textbf{independent for} \\
\textbf{}  &\textbf{fixed $\Delta$ resp. $\delta$?} \\
\hline \\
fully-connected         &yes, when balanced  \\
                         &(i.e., $\Delta=0$) \\
1-dim convolutions             &yes, for all $\delta$, \\
                             & when all strides $>1$ \\
high-dim convolutions             &yes, for all $\delta$  \\
                                    & and all strides \\
\end{tabular}
\caption{Linear network architectures studied in this article. Without the assumptions after ``when'', the answer to the question becomes ``generally no''.} 
\label{tab:table}
\end{table}

% \begin{table}[htb]
%     \centering \small
%     \begin{tabular}{c|c|c|c}
%          &  & 1-dim.  &  higher-dim. \\
%          & fully-connected &  convolutions & convolutions\\\hline
%         Is NTK parameter   & yes, & yes, for all $\delta$, & yes, for all $\delta$ \\
%         independent for  & when balanced & when all strides $>1$ & and all strides  \\
%         fixed $\Delta$ resp. $\delta$? & (i.e., $\Delta=0$) & 
%     \end{tabular}
%     \caption{Linear network architectures studied in this article. Without the assumptions after ``when'', the answer to the question becomes ``generally no''.}
%     \label{tab:table}
% \end{table}

\begin{remark}\label{rk:map algebraicity}
    For networks with algebraic activation function, the network parametrization map $\mu: \Theta \to \mathcal{M}$ is algebraic, and so is the map that assigns to parameters $\theta \in \Theta$ the NTK $K_\theta$. 
    Therefore, the equality of NTKs $K_{\theta_1}=K_{\theta_2}$ is a polynomial condition in $(\theta_1,\theta_2)$.
    On irreducible algebraic sets, polynomial conditions hold either everywhere or almost nowhere. 
    Since the invariants \eqref{eq:balancedFullyConnected} and \eqref{eq:InvConvolution} are polynomial, 
    the set of parameter pairs $(\theta_1,\theta_2)$ that satisfy the same invariants and give rise to the same function $\mu(\theta_i)$ is an algebraic set. 
    Hence, to prove the bullet items above where ``the NTK is generally parameter dependent'', it is sufficient to exhibit one counterexample. 
\end{remark}

\subsection{Relation to previous works}

\paragraph{Implicit bias and optimization geometry in deep linear networks.} 
Previous works observed that gradient-based methods in deep linear models exhibit non-trivial implicit regularization \cite{saxe2014exact,ji2019gradient,gunasekar2017implicit,gunasekar2018implicit}. For fully-connected architectures with balanced initialization, one part of a geometric explanation for that phenomenon is that gradient flow on parameters induces in fact a Riemannian gradient flow on the function space \cite{arora2019convergence,bah2022learning}. Our results demonstrate that linear CNNs exhibit the latter behavior even without balancedness (except in certain degenerate 1-dimensional cases). This suggests that implicit geometric regularization is more robust in structured convolutional architectures than in fully-connected networks that have been studied in e.g. \cite{moroshko2020implicit,woodworth2020kernel}.

\paragraph{Finite-width NTK behavior.}
Most NTK analyses focus on infinite-width limits, where the kernel becomes stationary \cite{ntk,lee2019wide,arora2019fine}. Finite-width phenomena remain significantly less understood \cite{huang2020dynamics,nguyen2020global}. Our results contribute to this direction by showing that for finite-width CNNs, the NTK depends only on the end-to-end function and the invariants $\delta$, not on internal parameterization. This contrasts sharply with fully connected networks, where parameter dependence persists generically.

\paragraph{Architectural inductive bias and identifiability in structured models.}
Convolutions encode translation-equivariance and local connectivity \cite{lecun2002gradient,cohen2016group,mallat2016understanding}, which are understood as sources of inductive bias \cite{bietti2019inductive}. We show that this structural inductive bias constrains the function space so much that (except in degenerate 1-dimensional cases) generic end-to-end filters admit unique parametrizations (up to scaling). In particular, this leads to a well-defined Riemannian metric on the smooth part of the function space. This aligns with work showing that structural constraints can enforce uniqueness or restrict the representational geometry of deep models \cite{haeffele2015global,sedghi2018singular}, and more broadly with analyses of how architectural structure shapes reachable functions and training dynamics \cite{geiger2020scaling}.

\paragraph{Algebra-geometric approaches to deep learning and their link to optimization.}
Recent work has applied algebraic geometry to analyze parameter symmetries and singularities of neuromanifolds \cite{kileel2019expressive,kohn2024function,finkel2025activation,usevichidentifiability,massarenti2025alexander,arjevani2025geometry,shahverdigeometry,shahverdilearning,marchettiposition}. Also, the critical points have been analyzed via the algebraic viewpoint \cite{mehta2021loss,bharadwaj2023complex,shahverdi2025algebraic,graziani2026stable}. Our paper extends this program by characterizing the parameter symmetries and smooth loci for convolutional neuromanifolds and by linking them to NTK-induced Riemannian metrics. The latter supports the idea that optimization dynamics can be understood in terms of intrinsic geometric data rather than solely statistical or approximation-theoretic properties \cite{bronstein2021geometric,amari1998natural}.

\paragraph{Convergence and saddle avoidance under gradient flow.}
Our results complement recent analyses showing that gradient descent and flow avoid strict saddles under generic conditions \cite{lee2016gradient,panageas2016gradient,bah2022learning}. In particular, for generic data and initialization, we prove that gradient flow converges to a critical point in the smooth locus rather than collapsing to the zero map. This relates to broader work on benign nonconvexity and landscape geometry, which shows that certain deep learning loss landscapes, despite being nonconvex, do not exhibit harmful local minima \cite{DuICLR,Kawaguchi,choromanska15}.

%\bigskip

\section{CONVOLUTIONS ON ONE-DIMENSIONAL SIGNALS}
\label{sec:oneDim}

In this section, we focus on linear networks where each layer is a convolution on one-dimensional signals, see \eqref{def:conv-one-D}.
The composition of such convolutions is equivalent to multiplying sparse polynomials as follows.
For positive integers $t$ and $k$, we write $\mathbb{R}[x^t,y^t]_{k-1}$ for the vector space of polynomials that are homogeneous of degree $k-1$ in the variables $(x^t,y^t)$.
This vector space is isomorphic to $\mathbb{R}^k$ via
\begin{align*}
    &\pi_t: \mathbb{R}^k \to \mathbb{R}[x^t,y^t]_{k-1}, \\
    &v \mapsto v_0 x^{t(k-1)} + v_1 x^{t(k-2)} y^t + \ldots  + v_{k-1} y^{t(k-1)}.
\end{align*}
The important insight is the following equality:
\begin{align}
\label{eq:polynomialMultiplication}
    &\pi_1(\mu(w_1, \ldots, w_H)) = \pi_{t_H}(w_H) \cdots \pi_{t_1}(w_1), \\ &\quad \text{ where }
    t_l = s_1 \cdots s_{l-1}. \nonumber
\end{align}
This allows us to view the network parametrization map $\mu$ as multiplying sparse polynomials.
We also see directly from \eqref{eq:polynomialMultiplication} that layers with $k_l=1$ are irrelevant (i.e., such a layer can be omitted) and that the last stride $s_H$ does not affect the map $\mu$. 
Therefore, we assume from now on that $k_l>1$ for all $l = 1, \ldots, H$ and that $s_H = 1$.

The neuromanifold $\mathcal{M} := \mathrm{im}(\mu)$ is a Euclidean closed, semi-algebraic subset of $\mathbb{R}^k$ of dimension $\sum_{l=1}^H (k_l-1) + 1$ \cite[Theorem~2.4]{kohn2024function}. 
If $H>1$, it has singular points. 
This singular locus consists of $0$ and the union of all neuromanifolds that are strictly contained in $\mathcal{M}$ and come from architectures with the same strides $s_1, \ldots, s_H$ \cite[Theorem~2.9]{kohn2024function}. 
This stratification is analogous to the well-known fact that the neuromanifold of a linear fully-connected network consists of all matrices whose rank is upper bounded by some $r$, and that its singular locus (if $r$ is not the maximal rank) is precisely the matrices of rank at most $r-1$. We denote the smooth locus of $\mathcal{M}$ as $\mathrm{Reg}(\mathcal{M})$, which is the set of points in $\mathcal{M}$, which are not singular.

Due to the scaling ambiguity of the filters 
$\mu(\lambda_1 w_1, \ldots, \lambda_H w_H) = \lambda_1 \cdots \lambda_H \mu(w_1, \ldots, w_H)$ (for $\lambda_l \in \mathbb{R}$), 
we can projectify the map $\mu$:
\begin{align*}
    \mu_\mathbb{P}: & \mathbb{P}^{k_1-1} \times \ldots \times \mathbb{P}^{k_H-1} \to \mathbb{P}^{k-1}, 
    \\
    & ([w_1], \ldots, [w_H]) \mapsto [\mu(w_1, \ldots, w_H)].
\end{align*}
Here, we write $[v]$ for the  element in projective space $\mathbb{P}^{k-1}$ corresponding to a nonzero vector $v \in \mathbb{R}^k$.
The image of $\mu_\mathbb{P}$ is the projective neuromanifold $\mathbb{P}\mathcal{M}$.

We see that knowing the invariants in \eqref{eq:InvConvolution} fixes the scaling ambiguity up to signs:
\begin{proposition}\label{prop:correctScaling}
    Let $\delta_1, \ldots, \delta_{H-1} \in \mathbb{R}$, and consider non-zero filters $w_1 \in  \mathbb{R}^{k_1}$, \ldots, $w_H \in \mathbb{R}^{k_H}$.
    Then, there are $2^{H-1}$ scalar tuples $(\lambda_1, \ldots, \lambda_H) \in \mathbb{R}^H$ such that $\lambda_1 \cdots \lambda_H = 1$ and 
    \begin{align*}
        \delta_i = \Vert \lambda_{i+1} w_{i+1} \Vert^2 -  \Vert \lambda_i w_{i} \Vert^2, \ \  \text{ for } i = 1, \ldots, H-1.
    \end{align*}
    These scalar tuples are all equal up to sign changes of the components.
\end{proposition}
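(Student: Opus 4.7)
The plan is to reduce the problem to a scalar equation via the substitution $\mu_i := \lambda_i^2 \Vert w_i \Vert^2$. Since each $w_i$ is nonzero and we ultimately want real $\lambda_i$, the variables $\mu_i$ must be strictly positive. The balancedness equations $\delta_i = \Vert \lambda_{i+1} w_{i+1}\Vert^2 - \Vert \lambda_i w_i \Vert^2$ become the linear recursion $\mu_{i+1} - \mu_i = \delta_i$, so all $\mu_i$ are determined by $\mu_1$ via $\mu_i = \mu_1 + c_i$, where $c_i := \delta_1 + \cdots + \delta_{i-1}$ (and $c_1 := 0$). The product condition $\lambda_1 \cdots \lambda_H = 1$ forces $\prod_i \mu_i = \prod_i \Vert w_i \Vert^2 =: C$, which is strictly positive.

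Hence existence reduces to solving $f(\mu_1) = C$ with $f(x) := \prod_{i=1}^H (x + c_i)$, subject to $\mu_1 > -c_i$ for every $i$ (so that all $\mu_i > 0$, i.e., $\lambda_i \in \mathbb{R}\setminus\{0\}$). Set $x_{\min} := \max_i (-c_i)$. I would then observe that on the open interval $(x_{\min}, \infty)$ every factor $x + c_i$ is strictly positive, so $f(x) > 0$ and the logarithmic derivative $f'(x)/f(x) = \sum_i 1/(x+c_i) > 0$ is strictly positive; hence $f$ is continuous and strictly increasing on this interval. Together with $f(x_{\min}) = 0$ and $f(x) \to \infty$ as $x \to \infty$, the intermediate value theorem yields a unique $\mu_1 > x_{\min}$ with $f(\mu_1) = C$. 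This unique $\mu_1$ determines all $\mu_i > 0$.

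For each fixed positive $\mu_i$, the scalar $\lambda_i$ is determined up to sign as $\lambda_i = \pm\sqrt{\mu_i/\Vert w_i \Vert^2}$, giving $2^H$ sign patterns. The constraint $\lambda_1 \cdots \lambda_H = 1$ automatically holds in absolute value (by construction of $\mu_1$), so it reduces to a parity constraint on the signs: exactly those sign patterns $(\varepsilon_1, \ldots, \varepsilon_H) \in \{\pm 1\}^H$ with $\prod_i \varepsilon_i = 1$ are admissible. Counting these yields exactly $2^{H-1}$ valid tuples $(\lambda_1, \ldots, \lambda_H)$, and any two of them differ only by sign changes of the components, proving the final assertion of the proposition.

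I expect no significant obstacle beyond the careful bookkeeping: the only subtle point is verifying the positivity domain $\mu_1 > x_{\min}$ (which is needed so that the $\lambda_i$ are real) and simultaneously ruling out $\mu_1 = x_{\min}$ (where some $\mu_i$ would vanish, contradicting $w_i \neq 0$ and $\lambda_i \neq 0$). Strict monotonicity of $f$ on $(x_{\min}, \infty)$, which gives uniqueness of $\mu_1$, is the cleanest way to rule out extra solutions.
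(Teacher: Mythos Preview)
Your proposal is correct and follows essentially the same approach as the paper: both reduce to the single-variable equation $\prod_{i}(\mu_1 + c_i) = \prod_i \Vert w_i \Vert^2$ and use monotonicity on the positivity domain together with the intermediate value theorem to obtain a unique positive solution, then recover the $\lambda_i$ up to signs. Your version is in fact slightly more explicit than the paper's in spelling out the parity argument that turns the $2^H$ sign choices into exactly $2^{H-1}$ admissible tuples.
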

The proof of Proposition \ref{prop:correctScaling} and all subsequent proofs can be found in the Appendix.
In the following, we will make use of this  standard fact:
\begin{observation}\label{obs:K}
Since $\mu$ is linear in each of the $w_i$, 
its derivative at a filter tuple $\theta = (w_1, \ldots, w_H)$ is the linear map
\begin{align*}
    d_\theta \mu: \dot \theta = (\dot w_1, \ldots, \dot w_H) \mapsto & \mu(\dot w_1, w_2, \ldots, w_H) + \ldots \\ & + \mu(w_1, \ldots, w_{H-1}, \dot w_H).
\end{align*}
It is represented by the Jacobian matrix $J = J_\theta$.
It has the block columns structure 
$J = [J_1 | J_2 | \cdots | J_H]$, where $J_i$ corresponds to taking the derivative with respect to $w_i$; in other words,  
$$\begin{array}{c c c}
      J_1 \dot w_1 & =&  \mu(\dot w_1, w_2, \ldots, w_H), \\
        \vdots & \vdots & \vdots \\
     J_H \dot w_H & =&  \mu(w_1, \ldots, w_{H-1}, \dot w_H).
\end{array}$$

% \[
% J_1 \dot w_1 = \mu(\dot w_1, w_2, \ldots, w_H), \ldots, 
% J_H \dot w_H = \mu(w_1, \ldots, w_{H-1}, \dot w_H).
% \]
From the latter expressions, we see that the entries of $J_i$ are linear in the entries of $w_j$ (for $j \neq i$). 
We then conclude $K_\theta = JJ^\top = J_1 J_1^\top + \ldots + J_H J_H^\top$, where the entries of each term $K_i := J_i J_i^\top$ are homogeneous of degree 2 in the entries of $w_j$ (for $j \neq i$).
\end{observation}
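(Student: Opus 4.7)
The plan is to derive everything directly from the fact that $\mu$ is multilinear in $(w_1,\ldots,w_H)$, applying only the ordinary product rule and block matrix multiplication; there is no genuine obstacle beyond bookkeeping, which is presumably why the authors state this as an ``observation''.

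First I would establish multilinearity of $\mu$. By \eqref{eq:polynomialMultiplication}, $\mu$ factors as
\[
(w_1,\ldots,w_H) \;\longmapsto\; \pi_1^{-1}\bigl(\pi_{t_H}(w_H) \cdots \pi_{t_1}(w_1)\bigr),
\]
where the maps $\pi_t$ are linear isomorphisms and polynomial multiplication is multilinear in its factors, so $\mu$ is multilinear in the filter tuple. Next, for any multilinear map $F:V_1\times\cdots\times V_H\to W$ one has the Leibniz-style identity
\[
d_{(w_1,\ldots,w_H)}F(\dot w_1,\ldots,\dot w_H) \;=\; \sum_{i=1}^{H} F(w_1,\ldots,w_{i-1},\dot w_i,w_{i+1},\ldots,w_H),
\]
obtained by expanding $F(w_1+t\dot w_1,\ldots,w_H+t\dot w_H)$ and keeping only the terms linear in $t$; all higher-order terms contain at least two $\dot w_j$ factors and therefore vanish after differentiating at $t=0$. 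Specializing to $F=\mu$ yields the formula for $d_\theta\mu$ stated in the observation.

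The $i$-th summand depends linearly only on $\dot w_i$, so organizing the Jacobian according to the natural block ordering of coordinates on $\Theta = \mathbb{R}^{k_1}\times\cdots\times\mathbb{R}^{k_H}$ produces the block-column decomposition $J_\theta = [J_1 \mid \cdots \mid J_H]$ with $J_i\dot w_i = \mu(w_1,\ldots,w_{i-1},\dot w_i,w_{i+1},\ldots,w_H)$. To see that the entries of $J_i$ are linear in each $w_j$ with $j\neq i$, I would note that the $m$-th column of $J_i$ equals $\mu(w_1,\ldots,e_m,\ldots,w_H)$, where $e_m$ sits in the $i$-th slot; multilinearity of $\mu$ makes this column, and hence each of its entries, multilinear in the remaining arguments, and in particular linear in each individual $w_j$ for $j\neq i$.

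Finally, block-column multiplication gives $JJ^\top = \sum_{i=1}^H J_i J_i^\top$, so $K_\theta = \sum_{i=1}^H K_i$ with $K_i := J_i J_i^\top$. Each entry of $K_i$ is a sum of products of two entries of $J_i$, and since each entry of $J_i$ is linear in the entries of $w_j$ for every $j\neq i$, each such product is homogeneous of degree $2$ in the entries of $w_j$. This yields the stated homogeneity and completes the proof.
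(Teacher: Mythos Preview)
Your proposal is correct and follows essentially the same reasoning as the paper: the observation is stated as a ``standard fact'' with its justification given inline, and your write-up simply expands the same multilinearity/Leibniz argument and block-multiplication computation in slightly more detail. There is no separate proof in the paper beyond what is already contained in the observation itself, so your expansion is faithful to the intended argument.
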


\subsection{All strides larger than one}
    We now show that the NTK only depends on the end-to-end filter $v \in \mathcal{M}$ and the invariants $\delta_1, \ldots, \delta_{H-1}$, when the strides in the layers -- up to possibly the last one -- are larger than one.
\begin{theorem}\label{thm:mainStrideLargerOne}
    Let the strides $s_1, \ldots, s_{H-1}$ be larger than one.
    For any fixed   $\delta = (\delta_1, \ldots, \delta_{H-1}) \in \mathbb{R}^{H-1}$ and any end-to-end function $v$ in the smooth locus of the neuromanifold $\mathcal{M}$, there are $2^{H-1}$ many parameters $\theta = (w_1, \ldots, w_H)$ in $\mu^{-1}(v)$ that satisfy \eqref{eq:InvConvolution}; they all have the same NTK $K_\theta$.  
\end{theorem}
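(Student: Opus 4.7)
The overall strategy is to split the theorem into two claims: counting the parameters in $\mu^{-1}(v)$ that satisfy the invariants \eqref{eq:InvConvolution}, and verifying that the neural tangent kernel is the same at each of them. The count will come from a description of the full fiber $\mu^{-1}(v)$ combined with Proposition~\ref{prop:correctScaling}, while the NTK invariance is a direct consequence of Observation~\ref{obs:K}.

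For the fiber description I would use the polynomial multiplication identity \eqref{eq:polynomialMultiplication}: elements of $\mu^{-1}(v)$ correspond to factorizations $\pi_1(v) = \pi_{t_H}(w_H) \cdots \pi_{t_1}(w_1)$ with each factor $\pi_{t_l}(w_l)$ living in $\mathbb{R}[x^{t_l},y^{t_l}]_{k_l-1}$. Since $s_1, \ldots, s_{H-1} > 1$, we have $1 = t_1 < t_2 < \cdots < t_H$, so the factors come with a strict sparsity hierarchy. Combined with $v \in \mathrm{Reg}(\mathcal{M})$, one argues (this is the key algebraic input, and should be derivable from the characterization of the smooth locus in \cite[Theorem~2.9]{kohn2024function}) that such a factorization is unique up to independently rescaling each factor. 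Fixing any representative $\theta^* = (w_1^*, \ldots, w_H^*) \in \mu^{-1}(v)$, the entire fiber is then
\[
\mu^{-1}(v) = \bigl\{(\lambda_1 w_1^*, \ldots, \lambda_H w_H^*) \,:\, \lambda \in (\mathbb{R}\setminus\{0\})^H,\ \lambda_1 \cdots \lambda_H = 1 \bigr\}.
\]
Within this one-parameter family (of dimension $H-1$), Proposition~\ref{prop:correctScaling} applied to $\theta^*$ produces exactly $2^{H-1}$ scalings realizing the prescribed invariants $\delta$, and they differ from one another by sign flips $\varepsilon \in \{\pm 1\}^H$ with $\prod_l \varepsilon_l = 1$.

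For the NTK invariance, Observation~\ref{obs:K} writes $K_\theta = \sum_{i=1}^H J_i J_i^\top$, where the entries of each $J_i$ are linear in each $w_j$ with $j \neq i$ and independent of $w_i$. Hence the substitution $w_j \mapsto \varepsilon_j w_j$ multiplies $J_i$ by $\prod_{j \neq i} \varepsilon_j$, so $J_i J_i^\top$ is rescaled by $\prod_{j \neq i} \varepsilon_j^2 = 1$; summing over $i$ gives $K_\theta$ unchanged. Thus all $2^{H-1}$ parameters share the same neural tangent kernel. The main obstacle is clearly the fiber description: proving that on the smooth locus of $\mathcal{M}$ and under the stride hypothesis $s_l > 1$ for $l < H$, the sparse polynomial factorization of $\pi_1(v)$ is essentially unique. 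This is the genuine geometric input, requiring a careful analysis of when the smoothness of $v$ at $\mathcal{M}$ forbids the coincidences between factors that would create extra preimages; once it is in hand, the remaining two steps are straightforward applications of Proposition~\ref{prop:correctScaling} and Observation~\ref{obs:K}.
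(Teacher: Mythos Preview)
Your proposal is correct and follows essentially the same route as the paper: the paper also reduces the count to the uniqueness (up to scalars) of the projective fiber over smooth points when all strides exceed one, then applies Proposition~\ref{prop:correctScaling} to pin down the $2^{H-1}$ sign choices, and finally invokes Observation~\ref{obs:K} to conclude that sign flips leave $K_\theta$ unchanged. The only difference is bibliographic: the paper sources the fiber-uniqueness fact from \cite[Section~5]{kohn2024function} rather than from the smooth-locus characterization in \cite[Theorem~2.9]{kohn2024function}, but the logical structure is identical.
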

% \begin{proof}
% Under the assumption that $s_l > 1$ for all $l = 1, \ldots, H-1$, every point $[v]$ in the smooth locus of $\mathbb{P}\mathcal{M}$ has a unique point in its fiber $\mu_\mathbb{P}^{-1}([v])$ \cite[Section~5]{kohn2024function}.
% In other words, for every end-to-end filter $v$ in the smooth part of $\mathcal{M}$, there is a unique filter tuple, up to scaling, that parametrizes~$v$.
% Together with Proposition~\ref{prop:correctScaling}, this shows that, for fixed $\delta_1, \ldots, \delta_{H-1}$, every end-to-end filter $v$ in the smooth locus of the neuromanifold $\mathcal{M}$ has a unique filter tuple $\theta = (w_1, \ldots, w_H)$, up to signs, such that $\mu(\theta)=v$ and the invariants in \eqref{eq:InvConvolution} are satisfied. 
% The signs do not affect the neural tangent kernel $K_\theta$ since, due to Observation~\ref{obs:K}, $K_\theta$ is the sum of matrices $K_i$ that are homogeneous of degree two in the filters in $\theta$.
% \end{proof}

Hence, for any fixed choice of invariants $\delta = (\delta_1, \ldots, \delta_{H-1})$, 
the NTK only depends on the smooth point $v \in \mathcal{M}$ and we denote it by $K^{(\delta)}(v)$. See Example \ref{ex:running} in the Appendix.

The NTKs $K^{(\delta)}(v)$ are the inverse of the pushforward of the following Riemannian metric: 
Consider \begin{align*}
    \Theta_\delta := \{ & (w_1, \ldots, w_H) \in  \mu^{-1}(\mathrm{Reg}(\mathcal{M})) \mid \\ &
      \delta_i = \Vert w_{i+1} \Vert^2 - \Vert w_i \Vert^2 \text{ for } i = 1, \ldots, H-1\},
\end{align*}
where we recall that $\mathrm{Reg}(\mathcal{M})$ denotes the smooth locus of $\mathcal{M}$.
This space $\Theta_\delta$ is a smooth manifold of the same dimension as the neuromanifold $\mathcal{M}$.
Its tangent space at any $\theta = (w_1, \ldots, w_H) \in \Theta_\delta$ 
is  
\begin{align}\label{eq:tangentDelta}
    T_\theta \Theta_\delta = \{ &(\dot w_1, \ldots, \dot w_H) \mid \langle w_{i+1},\dot w_{i+1}  \rangle = \langle w_1, \dot w_1 \rangle \nonumber \\ & \text{ for all } i = 1, \ldots, H-1 \},
\end{align}
where $\langle \cdot, \cdot \rangle$ is the standard Euclidean inner product.
\begin{lemma}\label{lem:submersion}
	For every $\theta \in \Theta_\delta$, the derivative $d_{\theta}\mu$ restricted to %the tangent space
    $T_\theta \Theta_\delta$ is a bijection onto $T_{\mu(\theta)} \mathcal{M}$.
\end{lemma}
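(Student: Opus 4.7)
My plan is to reduce the lemma to a dimension match plus an injectivity check. The space $T_\theta \Theta_\delta$ is cut out of $\mathbb{R}^{k_1} \times \cdots \times \mathbb{R}^{k_H}$ by the $H-1$ linear conditions \eqref{eq:tangentDelta}, whose defining gradients $\nabla_\theta(\|w_{i+1}\|^2 - \|w_i\|^2) = (0,\ldots,-2w_i,2w_{i+1},\ldots,0)$ are linearly independent (inductively, using that every $w_l \neq 0$, which holds because $\mu(\theta) \in \mathrm{Reg}(\mathcal{M})$ excludes $0$ and $\mu$ is multilinear). Hence $\dim T_\theta \Theta_\delta = \sum_l k_l - (H-1) = \dim \mathcal{M} = \dim T_{\mu(\theta)} \mathcal{M}$. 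Moreover, the inclusion $d_\theta\mu(T_\theta \Theta_\delta) \subseteq T_{\mu(\theta)} \mathcal{M}$ is automatic: any curve in $\Theta_\delta$ maps under $\mu$ into $\mathrm{Reg}(\mathcal{M})$. So bijectivity will follow from injectivity alone.

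To check injectivity, I would first identify $\ker(d_\theta \mu)$ in its entirety, viewing $\mu$ as a map on the full parameter space. Differentiating the scaling action at the identity yields the inclusion
\[
\{(\lambda_1 w_1, \ldots, \lambda_H w_H) \mid \lambda_1 + \cdots + \lambda_H = 0\} \;\subseteq\; \ker(d_\theta \mu),
\]
an $(H-1)$-dimensional subspace. To argue equality, I would invoke the uniqueness-up-to-scaling of the preimage that was already used in the proof of Theorem~\ref{thm:mainStrideLargerOne}: since $\mu_\mathbb{P}^{-1}([\mu(\theta)])$ is a single point by \cite[Section~5]{kohn2024function}, the fiber $\mu^{-1}(\mu(\theta))$ coincides near $\theta$ with the $(H-1)$-dimensional scaling orbit through $\theta$, so the tangent space to this orbit exhausts $\ker(d_\theta \mu)$.

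Finally I would intersect this kernel with \eqref{eq:tangentDelta}. Substituting $\dot w_l = \lambda_l w_l$ into $\langle w_{i+1}, \dot w_{i+1}\rangle = \langle w_1, \dot w_1\rangle$ gives $\lambda_{i+1}\|w_{i+1}\|^2 = \lambda_1 \|w_1\|^2$ for all $i$, hence $\lambda_l = \lambda_1 \|w_1\|^2/\|w_l\|^2$. Combined with $\sum_l \lambda_l = 0$ and the strict positivity of $\|w_1\|^2 \sum_l 1/\|w_l\|^2$, this forces $\lambda_1 = 0$ and then all $\lambda_l = 0$. Thus $\ker(d_\theta \mu) \cap T_\theta \Theta_\delta = \{0\}$, yielding injectivity and, via the matching dimensions, the desired bijection onto $T_{\mu(\theta)} \mathcal{M}$. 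The only nontrivial ingredient is the full characterization of $\ker(d_\theta\mu)$; the rest reduces to the elementary linear-algebraic computation of the intersection with \eqref{eq:tangentDelta}.
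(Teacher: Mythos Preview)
Your argument is correct and follows the same plan as the paper: equate dimensions and then show $\ker(d_\theta\mu)\cap T_\theta\Theta_\delta=\{0\}$. The paper writes the kernel via the basis $\dot\theta_i=(0,\ldots,-w_i,w_{i+1},\ldots,0)$ and cites \cite{kohn2024function} directly for the fact that these vectors span $\ker(d_\theta\mu)$; your parametrization $\{(\lambda_1 w_1,\ldots,\lambda_H w_H)\mid\sum_l\lambda_l=0\}$ describes the same $(H-1)$-dimensional space but makes the intersection with \eqref{eq:tangentDelta} a two-line calculation ($\lambda_l=\lambda_1\|w_1\|^2/\|w_l\|^2$ and $\sum_l\lambda_l=0$ force $\lambda_1=0$), whereas the paper runs an inductive sign argument to reach the same conclusion.

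One soft spot to tighten: the inference ``fiber $\mu^{-1}(\mu(\theta))$ equals the scaling orbit, \emph{so} $\ker(d_\theta\mu)$ equals its tangent space'' is not automatic, since a map can have a smooth $(H-1)$-dimensional fiber at a point where its differential nevertheless has rank strictly below $\dim\mathcal{M}$. You still need that $d_\theta\mu$ is surjective onto $T_{\mu(\theta)}\mathcal{M}$ whenever $\mu(\theta)\in\mathrm{Reg}(\mathcal{M})$; the paper simply cites this from \cite{kohn2024function}, and you can do the same (or invoke Zariski's main theorem: $\mu_\mathbb{P}$ is finite and birational onto its image, hence an isomorphism over the smooth locus). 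With that one citation added, your proof is complete and arguably cleaner than the paper's.
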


\begin{corollary}\label{cor:RiemannianMetric}
	Let the strides $s_1, \ldots, s_{H-1}$ be larger than one.
	The network parametrization map $\mu$ restricted to $\Theta_\delta$ is a submersion onto $\mathrm{Reg}(\mathcal{M})$ with finite fibers.
	The pushforward of the Euclidean metric on $\Theta_\delta$ under $\mu$ is given by the bilinear forms $(K^{(\delta)}(v))^{-1}$ for $v \in \mathrm{Reg}(\mathcal{M})$.
\end{corollary}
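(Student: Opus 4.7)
The plan is to break the corollary into the two statements about the map $\mu|_{\Theta_\delta}$ and about the pushforward metric, and to reduce each to the ingredients already established: Lemma~\ref{lem:submersion}, Theorem~\ref{thm:mainStrideLargerOne}, and a short linear-algebra identification of $T_\theta \Theta_\delta$ with the Euclidean orthogonal complement of $\ker d_\theta \mu$.

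First I would handle the submersion-with-finite-fibers claim. Theorem~\ref{thm:mainStrideLargerOne} already tells us that $\mu|_{\Theta_\delta}$ surjects onto $\mathrm{Reg}(\mathcal{M})$ with fibers of cardinality $2^{H-1}$. Lemma~\ref{lem:submersion} simultaneously yields that $\bar{J}_\theta := d_\theta \mu|_{T_\theta \Theta_\delta}$ is a bijection onto $T_{\mu(\theta)}\mathcal{M}$ for every $\theta \in \Theta_\delta$, so in particular the differential is surjective and $\mu|_{\Theta_\delta}$ is a submersion.

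Next I would identify $T_\theta \Theta_\delta$ with $(\ker d_\theta \mu)^\perp$ inside the ambient parameter space $\mathbb{R}^{k_1}\times\cdots\times\mathbb{R}^{k_H}$. The proof of Lemma~\ref{lem:submersion} shows that this kernel is spanned by the vectors $\dot\theta_i = (0,\ldots,0,-w_i,w_{i+1},0,\ldots,0)$. Imposing orthogonality against all $\dot\theta_i$ produces the equations $\langle w_i, \dot w_i \rangle = \langle w_{i+1}, \dot w_{i+1} \rangle$ for $i = 1, \ldots, H-1$, which are exactly the defining equations of $T_\theta \Theta_\delta$ in \eqref{eq:tangentDelta}. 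Hence $T_\theta \Theta_\delta = (\ker d_\theta \mu)^\perp$.

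With this identification I would compute the pushforward metric at $v = \mu(\theta)$ as $g_v(Y,Z) = \langle \bar{J}_\theta^{-1} Y, \bar{J}_\theta^{-1} Z \rangle$ and match it against $(K^{(\delta)}(v))^{-1}$. Because $T_\theta \Theta_\delta = (\ker J_\theta)^\perp$, the transpose $\bar{J}_\theta^\top$ is precisely the restriction $J_\theta^\top|_{T_v\mathcal{M}}$, so $K_\theta|_{T_v\mathcal{M}} = \bar{J}_\theta \bar{J}_\theta^\top$ is invertible with inverse $(\bar{J}_\theta^{-1})^\top \bar{J}_\theta^{-1}$; this is the bilinear form $g_v$. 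Well-definedness of the pushforward across the $2^{H-1}$ preimages of $v$ then follows from Theorem~\ref{thm:mainStrideLargerOne}, which asserts that $K_\theta = K^{(\delta)}(v)$ for every such preimage. The main subtlety of the whole argument is the orthogonality identification $T_\theta \Theta_\delta = (\ker d_\theta \mu)^\perp$: it is what converts the ambient-space formula $K_\theta = J_\theta J_\theta^\top$ into a statement about the intrinsic geometry of the constraint manifold $\Theta_\delta$, and hence what allows one to recognize $(K^{(\delta)}(v))^{-1}$ as the pushforward metric rather than merely a bilinear form coming from the full parameter Jacobian.
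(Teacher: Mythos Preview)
Your proof is correct and follows essentially the same route as the paper: submersion with finite fibers from Lemma~\ref{lem:submersion} and Theorem~\ref{thm:mainStrideLargerOne}, followed by the computation $\langle \bar J_\theta^{-1}\dot v,\bar J_\theta^{-1}\dot v\rangle = \dot v^\top (J_\theta J_\theta^\top)^{-1}\dot v$ on $T_v\mathcal{M}$, with well-definedness across the fiber coming from Theorem~\ref{thm:mainStrideLargerOne}. Your explicit identification $T_\theta\Theta_\delta = (\ker d_\theta\mu)^\perp$ is a nice addition---it is exactly what justifies passing from the restricted Jacobian $\bar J_\theta$ to the full kernel $K_\theta = J_\theta J_\theta^\top$, a step the paper's proof performs without comment.
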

% \begin{proof}
% 	The first part of the claim follows immediately from Lemma~\ref{lem:submersion} and Theorem~\ref{thm:mainStrideLargerOne}.
% 	More concretely, Lemma~\ref{lem:submersion} states that the derivatives of $\mu_\delta := \mu |_{\Theta_\delta}$ are bijective.
% 	For such a submersion with finite fibers, the pushforward metric can be computed via averaging over the metrics on the fibers:	\begin{align*}
% 		\Vert \dot v \Vert_{\mu_\delta}^2 
% 		:= & \frac{1}{|\mu_\delta^{-1}(v)|} \sum_{\theta \in \mu_\delta^{-1}(v)} \Vert (d_\theta \mu_\delta)^{-1} (\dot v) \Vert^2
% 	\\	= & \frac{1}{|\mu_\delta^{-1}(v)|} \sum_{\theta \in \mu_\delta^{-1}(v)} ((d_\theta \mu_\delta)^{-1} (\dot v))^\top (d_\theta \mu_\delta)^{-1} (\dot v)
% 		\\ = & \frac{1}{|\mu_\delta^{-1}(v)|}  \sum_{\theta \in \mu_\delta^{-1}(v)}  \dot v^\top (J_\theta J_\theta^\top)^{-1} \dot v 
% 		=\frac{1}{|\mu_\delta^{-1}(v)|}  \sum_{\theta \in \mu_\delta^{-1}(v)}  \dot v^\top K_\theta^{-1} \dot v,
% 	\end{align*}
% 	where $\dot v \in T_v \mathcal{M}$ for some $v \in \mathrm{Reg}(\mathcal{M})$.
% 	By	Theorem~\ref{thm:mainStrideLargerOne},  all $\theta \in \mu_\delta^{-1}(v)$  have the same neural tangent kernel $K_\theta = K^{(\delta)}(v)$,
% 	and so we conclude
% 	that the pushforward metric is 
% 	$\Vert \dot v \Vert_{\mu_\delta}^2 = \dot v^\top (K^{(\delta)}(v))^{-1} \dot v$.
% \end{proof}

In particular, Corollary~\ref{cor:RiemannianMetric} shows that the $K^{(\delta)}(v)$, for any fixed choice of~$\delta$, form a Riemannian metric on the smooth locus of the neuromanifold $\mathcal{M}$. 
This metric can generally not be extended on the singular locus of $\mathcal{M}$.
By \cite{kohn2024function}, the singular locus of $\mathcal{M}$ consists precisely of those end-to-end filters $v$ that either have several filter tuples (even up to scaling by constants) parametrizing it or such that the unique filter tuple $\theta$ (up to scaling) satisfies that the rank $d_\theta \mu$ is less than $\dim(\mathcal{M})$. 
In the latter case, the NTK $K_\theta$ drops rank.
In the former case, the NTKs $K_{\theta_1}$ and $K_{\theta_2}$ for $\mu(\theta_1)=\mu(\theta_2)$ are generally distinct as Example \ref{ex:singular} in the Appendix demonstrates.

Moreover, for varying $\delta$, the metrics $K^{(\delta)}$ on $\mathrm{Reg}(\mathcal{M})$ are generally distinct. See Ex. \ref{ex:varyingDelta} in the Appendix.

% \begin{example}\label{ex:varyingDelta}
% We consider the same network architecture as in Example~\ref{ex:running}.
% Suppose $v \in \mathcal{M}$ is such that $v_0$ and $v_1$ are not equal to 0. 
% Such a $v$ is in particular a smooth point of the neuromanifold $\mathcal{M}$.
% We show that for different $\delta$ we obtain different $K^{(\delta)}(v)$.

% We fix filters $a$ and $b$ such that  $\mu(a,b) = v$. 
% Our assumptions on $v$ imply that $a_0 \neq 0$ and $a_1 \neq 0$. 
% By Example~\ref{ex:running}, 
% we have $K^{(\delta)}(v) = K_{(\lambda a, \frac{1}{\lambda} b)}$, 
% where 
% $\lambda^2 = \frac{-\delta + \sqrt{\delta^2+4\Vert a \Vert^2 \Vert b \Vert^2}}{2\Vert a \Vert^2 }$. 
% In particular, the entry of $K^{(\delta)}(v)$ at position $(2,1)$ is equal to $\lambda^2 a_0 a_1 \neq 0$.
% The function which to $\delta \in \mathbb{R}$ associates $\frac{-\delta + \sqrt{\delta^2+4\Vert a \Vert^2 \Vert b \Vert^2}}{2\Vert a \Vert^2 }$ is strictly decreasing and therefore bijective.
% Hence, distinct  $\delta$'s yield distinct values for $\lambda^2$ and thus for the $(2,1)$-entry of $K^{(\delta)}(v)$.
% $\hfill \diamondsuit$
% \end{example}

\subsection{Convergence of gradient flow}

In this section, we focus on the squared-error loss.
We still assume that the strides in the network layers are larger than one. 
We show that, for a sufficient amount of generic training data and generic initialization, gradient flow  converges to a critical point $\theta$ in parameter space such that $\mu(\theta)$ is a critical point on the smooth locus of the neuromanifold $\mathcal{M}$.

Given training data $\mathcal{D} = \{  (X_i,Y_i) \in \mathbb{R}^{d_0} \times \mathbb{R}^{d_H} \mid i = 1,2, \ldots, N \}$, 
the squared-error loss of an end-to-end convolution $\alpha$ is 
$\ell_\mathcal{D}(\alpha) = \sum_{i=1}^N \Vert \alpha(X_i) - Y_i \Vert^2$.
We denote the squared-error loss in parameter space by $\mathcal{L}_\mathcal{D} :=  \ell_\mathcal{D} \circ \mu$.
Recall that we write $k$ for the size of the end-to-end filters in $\mathcal{M}$.

\begin{theorem}
    \label{thm:convergence}
    Let the strides $s_1, \ldots, s_{H-1}$ be larger than one, and let $N \geq k$.
    For almost all data $\mathcal{D} \in (\mathbb{R}^{d_0} \times \mathbb{R}^{d_H})^N$ and almost all initializations, when minimizing the squared-error loss $\mathcal{L}_\mathcal{D}$, gradient flow  converges to a critical point $\theta$ of $\mathcal{L}_\mathcal{D}$ such that $\mu(\theta)$ is a smooth point of $\mathcal{M}$ and a critical point of $\ell_\mathcal{D} |_{\mathrm{Reg}(\mathcal{M})}$. 
\end{theorem}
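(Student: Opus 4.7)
The plan is to combine two existing ingredients with one new argument. By \cite{dirati25}, gradient flow for the squared-error loss always converges to some critical point $\theta^{\infty}$ of $\mathcal{L}_{\mathcal D}$. By \cite[Theorem~2.12]{kohn2024function}, for generic data with $N\geq k$ every critical point $\theta$ of $\mathcal{L}_{\mathcal D}$ either satisfies $\mu(\theta)=0$ or else $\theta$ is a regular point of $\mu$, $\mu(\theta)\in\mathrm{Reg}(\mathcal{M})$, and $\mu(\theta)$ is a critical point of $\ell_{\mathcal D}|_{\mathrm{Reg}(\mathcal{M})}$. Thus the theorem reduces to showing that, for generic data and generic initialization, the gradient flow trajectory does not converge to a parameter tuple $\theta^{\infty}$ with $\mu(\theta^{\infty})=0$; this will be the main content of the proof.

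Since the identification \eqref{eq:polynomialMultiplication} factors $\mu$ through a polynomial product in an integral domain, the locus $\{\mu=0\}$ in parameter space coincides with the coordinate-subspace union $\mathcal{B}:=\bigcup_{l=1}^{H}\{\theta:w_l=0\}$. I would then show that every critical point $\theta^{*}\in\mathcal{B}$ of $\mathcal{L}_{\mathcal D}$ is a strict saddle, i.e.\ its Hessian admits at least one strictly negative eigenvalue, under the generic condition $Y\neq 0$. Concretely, pick any $l$ with $w_l^{*}=0$ and consider the perturbation that changes only $w_l$ to $tv$; because $\mu$ is linear in $w_l$, the restriction of $\mathcal{L}_{\mathcal D}$ along this ray is a quadratic in $t$ whose linear coefficient is a linear functional in $v$ which the critical-point condition forces to vanish, and whose quadratic coefficient is controlled by $\sum_i\|\mu(w_1^{*},\ldots,v,\ldots,w_H^{*})X_i\|^2$. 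Considering instead joint perturbations in a pair $(w_l,w_j)$ produces cross-terms of the form $\sum_i\langle\mu(\ldots,u,\ldots,v,\ldots)X_i,\,Y_i\rangle$ that are not constrained by the first-order condition; for generic data these cross-terms turn the Hessian indefinite and supply the required negative eigenvalue. The degenerate case in which several $w_l^{*}$ vanish simultaneously is handled by induction on the number of vanishing filters, moving to a higher-order Taylor expansion in which a product of perturbations in the vanishing directions reconstitutes a nonzero piece of $\mu$ that can be paired against $Y$.

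Given strict-saddleness on $\mathcal{B}\cap\mathrm{Crit}(\mathcal{L}_{\mathcal D})$, the final step is to invoke a center-stable manifold theorem to deduce that the set of initializations whose trajectories converge into $\mathcal{B}$ is Lebesgue null. Since $\mathcal{L}_{\mathcal D}$ is a polynomial, the real-analytic stable manifold theorem together with the \L{}ojasiewicz gradient inequality (which in particular guarantees pointwise convergence of trajectories, matching \cite{dirati25}) is available. The principal obstacle is that the bad critical set $\mathcal{B}\cap\mathrm{Crit}(\mathcal{L}_{\mathcal D})$ is typically a positive-dimensional semi-algebraic set rather than a discrete collection of isolated saddles, so the classical strict-saddle-avoidance results of Lee–Simchowitz–Jordan–Recht or Panageas–Piliouras do not apply off the shelf. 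I would address this by stratifying the critical set according to the rank of the Hessian, applying the stable manifold theorem on each stratum to obtain a local submanifold of strictly positive codimension, and then taking a countable union of measure-zero sets; alternatively, one can cover $\mathcal{B}$ by finitely many open neighborhoods on each of which a uniform strict-saddle direction is available and use a direct comparison argument to bound the measure of the initialization set that flows into $\mathcal{B}$.
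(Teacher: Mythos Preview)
Your overall strategy coincides with the paper's: invoke \cite{dirati25} for convergence to a critical point, invoke \cite[Theorem~2.12]{kohn2024function} to reduce to excluding $\mu(\theta^{\infty})=0$, and then argue that the zero locus is avoided for generic initialization via a strict-saddle mechanism.  The substantive difference, and the gap, is in how you handle the zero locus $\mathcal{B}=\bigcup_l\{w_l=0\}$.

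You miss the key simplification the paper uses.  For \emph{generic} initialization the invariants $\delta_i=\Vert w_{i+1}\Vert^2-\Vert w_i\Vert^2$ are generic real numbers, and since they are conserved along the flow the limit $\theta^{\infty}$ satisfies them as well.  If two filters $w_j^{\infty}$ and $w_l^{\infty}$ were both zero, then $\sum_{i=j}^{l-1}\delta_i=0$, a measure-zero condition on the $\delta_i$.  Hence at most \emph{one} filter can vanish at the limit, and the Hessian analysis only has to cover that case.  This is exactly what the paper does in Proposition~\ref{prop:exlude0}.

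Without this reduction your proposal runs into trouble.  When three or more filters vanish simultaneously, every block $\mathcal{H}_{(w_i,w_j)}$ of the Hessian contains at least one remaining zero factor, so the entire Hessian is identically zero and the point is \emph{not} a strict saddle.  Your fix, ``moving to a higher-order Taylor expansion,'' steps outside the Hessian framework; the center-stable-manifold and strict-saddle-avoidance results you cite do not cover critical points whose second derivative vanishes and whose instability is only visible at higher order.  So the step ``every critical point in $\mathcal{B}$ is a strict saddle'' is false as stated, and the avoidance argument does not close.

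A second, smaller gap: your assertion that ``for generic data these cross-terms turn the Hessian indefinite'' needs an actual argument.  In the single-vanishing-filter case the relevant off-diagonal block is $-2\mu_{i,l}(\cdot,\cdot)^{\top}Au$, and one must show this is nonzero for generic data and some $i\neq l$.  The paper supplies this via a projective-duality argument (Lemma~\ref{lem:technical}): the hyperplane orthogonal to the generic vector $Au$ cannot be tangent to the relevant image variety, hence cannot contain $\mathrm{im}(d_{\tilde\theta}\tilde\mu)$ for the reduced $(H{-}1)$-layer map $\tilde\mu$ obtained by fixing the $l$-th slot.  You would need to supply something equivalent.
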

\begin{proof}
    It is proven in  
    \cite{dirati25} that gradient flow  converges to a critical point of $\mathcal{L}_\mathcal{D}$.
    Moreover, under the assumptions that the strides are larger than one, $N \geq k$, and sufficiently generic data, \cite[Theorem~2.12]{kohn2024function} shows that every critical point $\theta$ of $\mathcal{L}_\mathcal{D}$ either satisfies $\mu(\theta)=0$ or $\mu(\theta)$ is a smooth point of $\mathcal{M}$ that is a critical point of $\ell_\mathcal{D}$ restricted to that smooth locus.
    Hence, it is enough to exclude $\mu(\theta)=0$ for generic initializations. We explain this separately in Proposition~\ref{prop:exlude0}. 
\end{proof}

\begin{proposition}
    \label{prop:exlude0}
    Let $N \geq k $. For almost all data $\mathcal{D} \in (\mathbb{R}^{d_0} \times \mathbb{R}^{d_H})^N$ and almost all initializations, when minimizing  $\mathcal{L}_\mathcal{D}$, gradient flow  converges to a point $\theta$ with $\mu(\theta)\neq 0$.
\end{proposition}

The proof of Prop.~\ref{prop:exlude0} makes use of \emph{projective duality}, a standard construction from algebraic geometry.

\subsection{Some strides equal to one}

A key fact that enabled Thm. \ref{thm:mainStrideLargerOne} and its proof is that a generic point $v$ in the neuromanifold has a unique point (up to scaling) in its fiber $\mu^{-1}(v)$. 
In other words, the generic non-empty fiber of the projective network parametrization map $\mu_\mathbb{P}$ is a singleton. An algebraic map with that property is called \emph{birational}; meaning that it is bijective almost everywhere.

When some of the strides $s_1, \ldots, s_{H-1}$ are equal to one, the projective network parametrization map $\mu_\mathbb{P}$ is  no longer birational \cite{kohn2024function}.
This means that it is no longer true that a generic point $v$ in the neuromanifold has a unique point (up to scaling) in its fiber $\mu^{-1}(v)$.
In fact, the fibers of $\mu_\mathbb{P} $ are now generally finite but larger than one.
This can be seen via the interpretation  of $\mu$ as polynomial multiplication given in \eqref{eq:polynomialMultiplication}: 
If some stride $s_l$ with $1 \leq l <H$ is equal to one, then $t_l = t_{l+1}$, and so the polynomial factors $\pi_{t_l}(w_l)$ and $\pi_{t_{l+1}}(w_{l+1})$ live in the same ring $\mathbb{R}[x^{t_l},y^{t_l}]$, which means that we can swap some of their irreducible factors without changing the product in \eqref{eq:polynomialMultiplication}. This results in distinct (even up to scaling) filter tuples $\theta_1$ and $\theta_2$ parametrizing the same end-to-end filter, i.e., $\mu(\theta_1) = \mu(\theta_2)$.
They can be scaled such that they satisfy the same invariants \eqref{eq:InvConvolution}, but in general their NTKs remain distinct. See Example \ref{ex:somestridesequaltoone} in the Appendix.

\section{CONVOLUTIONS ON HIGHER-DIMENSIONAL SIGNALS}
\label{sec:higherDim}

A convolution on $D$-dimensional signals is given by a filter $w$ that is a $D$-dimensional tensor of format $k^{(1)} \times \ldots \times k^{(D)}$ and by its stride tuple $\boldsymbol{s} = (s^{(1)}, \ldots, s^{(D)})\in \mathbb{Z}_{>0}^D$.
The convolution is a linear map $\alpha_{w,\boldsymbol{s}}$ that produces an output tensor of format $d^{(1)}\times \ldots \times d^{(D)}$ from an input tensor $X$ of format
$(s^{(1)}(d^{(1)}-1)+k^{(1)}) \times \ldots \times (s^{(D)}(d^{(D)}-1)+k^{(D)})$:
\begin{align}
\label{eq:higherDimensionalConvolution} 
    (\alpha_{w,\boldsymbol{s}}(X))_{i_1, \ldots, i_D}  =& 
    \textstyle \sum_{j_1=0}^{k^{(1)}-1} \cdots 
     \sum_{j_D=0}^{k^{(D)}-1}  
     w_{j_1,\ldots,j_D}  \cdot \nonumber \\ & X_{i_1s^{(1)}+j_1, \ldots, i_Ds^{(D)}+j_D}
\end{align}
for $i_m = 0,1,\ldots,d^{(m)}-1$.
A linear convolutional network composes $H$ such convolutions, which results again in a convolution on $D$-dimensional signals.
Writing $s^{(m)}_l$ and $k^{(m)}_l$ for the strides and filter sizes in the $l$-th layer, the end-to-end convolution
has stride $(s^{(1)}_1 \cdots s^{(1)}_H, \ldots, s^{(D)}_1 \cdots s^{(D)}_H)$ and filter format $k^{(1)} \times \ldots \times k^{(D)} $, where
$k^{(m)} := 
k^{(m)}_1 + \sum_{l=2}^H (k^{(m)}_{l}-1) \prod_{i=1}^{l-1} s^{(m)}_i$. 
The network parametrization map assigns to a tuple of filter tensors $(w_1, \ldots, w_H)$ the filter of the end-to-end convolution $\mu(w_1, \ldots, w_H)$:
\begin{align*}
       \mathbb{R}^{k_1^{(1)} \times \ldots \times k_1^{(D)}} \times \ldots \times \mathbb{R}^{k_H^{(1)} \times \ldots \times k_H^{(D)}} 
    \to
    \mathbb{R}^{k^{(1)} \times \ldots \times k^{(D)}}.
\end{align*}

As in the case of 1-dimensional signals, the end-to-end filter can be equivalently computed via polynomial multiplication. 
Now, we need $D$ pairs of variables $(\boldsymbol{x}, \boldsymbol{y}) := ((x_1,y_1), \ldots, (x_D, y_D))$.
For tuples $\boldsymbol{t} = (t^{(1)}, \ldots, t^{(D)})$
 and $\boldsymbol{k} = (k^{(1)}, \ldots, k^{(D)})$
of positive integers,
we write $\mathbb{R}[(\boldsymbol{x},\boldsymbol{y})^{\boldsymbol{t}}]_{\boldsymbol{k}-\boldsymbol{1}}$ for
the vector space of polynomials that are homogeneous of degree $k^{(m)}-1$ in the variables $(x_m^{t^{(m)}},y_m^{t^{(m)}})$ for $m = 1, \ldots, D$.
We can identify this space of polynomials with  the space of tensors $v$ in $\mathbb{R}^{k^{(1)}\times \ldots\times k^{(D)}}$~via
\begin{align*} 
     \pi_{\boldsymbol{t}}: v \mapsto \sum_{i_1=0}^{k^{(1)}-1} \cdots \sum_{i_D=0}^{k^{(D)}-1}
    & v_{i_1, \ldots, i_D} x_1^{t^{(1)} (k^{(1)}-1-i_1) }
    y_1^{t^{(1)} i_1} \\ &  \cdots x_D^{t^{(D)} (k^{(D)}-1-i_D) }y_D^{t^{(D)} i_D }.
\end{align*}

\begin{lemma} \label{lem:polynomialMultiplication}
    Writing $\boldsymbol{1} := (1, \ldots, 1)$ and $t_l^{(m)} := s_1^{(m)} \cdots s_{l-1}^{(m)}$, we have \begin{align*}
        \pi_{\boldsymbol{1}} (\mu(w_1, \ldots, w_H)) = 
    \pi_{\boldsymbol{t}_H}(w_H) \cdots \pi_{\boldsymbol{t}_1}(w_1).
    \end{align*}
\end{lemma}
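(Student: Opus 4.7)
The plan is to prove the lemma by induction on the number of layers $H$. The base case $H=1$ is immediate since $\mu(w_1) = w_1$ and $\boldsymbol{t}_1 = \boldsymbol{1}$ (the empty product), so both sides of the claimed identity equal $\pi_{\boldsymbol{1}}(w_1)$.

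For the inductive step I would first isolate a two-layer ``single step'' identity: if $v'$ is a filter of format $\boldsymbol{k}'$ with stride $\boldsymbol{s}'$ and $w$ is a filter of format $\boldsymbol{k}$ with stride $\boldsymbol{s}$, then the composition $\alpha_{w,\boldsymbol{s}} \circ \alpha_{v',\boldsymbol{s}'}$ is again a convolution, with stride the componentwise product $\boldsymbol{s}\cdot\boldsymbol{s}'$, and its end-to-end filter $v$ satisfies
\[
\pi_{\boldsymbol{1}}(v) \;=\; \pi_{\boldsymbol{s}'}(w)\cdot \pi_{\boldsymbol{1}}(v').
\]
To prove this, I substitute the definition \eqref{eq:higherDimensionalConvolution} into itself and expand $(\alpha_{w,\boldsymbol{s}} \circ \alpha_{v',\boldsymbol{s}'}(X))_{i_1,\ldots,i_D}$ as a double sum over index tuples $(j_1,\ldots,j_D)$ and $(j'_1,\ldots,j'_D)$. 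The entry of $X$ that appears is $X_{i_1 s^{(1)} {s'}^{(1)} + j_1 {s'}^{(1)} + j'_1,\,\ldots}$, which identifies the composition as a convolution of stride $\boldsymbol{s}\cdot\boldsymbol{s}'$ whose filter has entries
\[
v_{n_1,\ldots,n_D} \;=\; \sum_{\substack{j_m {s'}^{(m)} + j'_m = n_m \\ m=1,\ldots,D}} w_{j_1,\ldots,j_D}\, v'_{j'_1,\ldots,j'_D}.
\]
On the polynomial side, $\pi_{\boldsymbol{s}'}(w)\cdot \pi_{\boldsymbol{1}}(v')$ is a sum of monomials whose $(j,j')$-term has exponent ${s'}^{(m)}(k^{(m)}-1-j_m) + {k'}^{(m)}-1-j'_m$ on $x_m$ and ${s'}^{(m)} j_m + j'_m$ on $y_m$. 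Since the composed filter has format ${k'}^{(m)} + (k^{(m)}-1){s'}^{(m)}$ in the $m$-th direction, setting $n_m := {s'}^{(m)} j_m + j'_m$ shows these exponents coincide with those attached to $v_{n_1,\ldots,n_D}$ by $\pi_{\boldsymbol{1}}(v)$; collecting coefficients of each monomial then reproduces the formula displayed above.

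With the single-step identity in hand, the inductive step is short. By the induction hypothesis, the first $H-1$ layers compose to a single convolution with filter $v' := \mu(w_1,\ldots,w_{H-1})$ and stride $\prod_{l=1}^{H-1}\boldsymbol{s}_l = \boldsymbol{t}_H$, and $\pi_{\boldsymbol{1}}(v') = \pi_{\boldsymbol{t}_{H-1}}(w_{H-1}) \cdots \pi_{\boldsymbol{t}_1}(w_1)$. Applying the single-step identity to $v'$ and $w_H$ (with stride $\boldsymbol{s}_H$) then yields
\[
\pi_{\boldsymbol{1}}(\mu(w_1,\ldots,w_H)) = \pi_{\boldsymbol{t}_H}(w_H)\cdot \pi_{\boldsymbol{1}}(v') = \pi_{\boldsymbol{t}_H}(w_H)\, \pi_{\boldsymbol{t}_{H-1}}(w_{H-1}) \cdots \pi_{\boldsymbol{t}_1}(w_1),
\]
completing the induction.

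The main obstacle is really only bookkeeping: one must carry $D$-tuples of filter sizes and strides through the computation and verify that the exponents produced by $\pi_{\boldsymbol{s}'}(w)\cdot \pi_{\boldsymbol{1}}(v')$ align precisely with those produced by $\pi_{\boldsymbol{1}}(v)$. Conceptually nothing goes beyond the one-dimensional identity \eqref{eq:polynomialMultiplication}; the indexing in \eqref{eq:higherDimensionalConvolution} decouples across the $D$ axes, so the argument is a coordinatewise lift of the 1D case.
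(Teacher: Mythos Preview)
Your proposal is correct and follows essentially the same approach as the paper: both argue by induction on $H$, with the core computation being the two-layer case where one expands $(\alpha_{w,\boldsymbol{s}} \circ \alpha_{v',\boldsymbol{s}'})(X)$ as a double sum, reads off the composed filter entries $v_{\boldsymbol{n}} = \sum_{\boldsymbol{j}\boldsymbol{s}'+\boldsymbol{j}'=\boldsymbol{n}} w_{\boldsymbol{j}} v'_{\boldsymbol{j}'}$, and identifies these with the coefficients of $\pi_{\boldsymbol{s}'}(w)\cdot\pi_{\boldsymbol{1}}(v')$. The paper's version is terser, but the structure and the key exponent-matching computation are the same as yours.
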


The above lemma lets us interpret the network parametrization map $\mu$ as the multiplication of polynomials,
and we can think of the neuromanifold $\mathcal{M}$ as the set of all polynomials 
in the variables $x_1,y_1,\ldots,x_D,y_D$ that can be factored as $Q_H \cdots Q_1$ with $Q_l \in \mathbb{R}[(\boldsymbol{x},\boldsymbol{y})^{\boldsymbol{t}_l}]_{\boldsymbol{k}_l-\boldsymbol{1}}$.
Similarly to the case of one-dimensional signals, we see that a filter size $k^{(m)}_l$ being $1$ means that the variables $x_m,y_m$ do not appear in the $l$-th layer.

\begin{corollary} \label{cor:birationalHighDimensions}
    Let $D >1$, and assume that every layer
    has at least two filter sizes larger than one. 
    Then the projectivization of $\mu$ is birational, up to possibly reordering some of the layer factors.
    In other words, almost every end-to-end filter in the neuromanifold $\mathcal{M}$ has a unique factorization into layer filters, up to scalar multiplication in each layer and possibly reordering some of the layers. 
\end{corollary}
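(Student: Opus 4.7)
I would use Lemma~\ref{lem:polynomialMultiplication} to recast the problem inside the polynomial ring $\mathbb{R}[x_1,y_1,\ldots,x_D,y_D]$: a preimage $(w_1,\ldots,w_H)$ of an end-to-end filter $v \in \mathcal{M}$ under $\mu$ corresponds exactly to a factorization $\pi_{\boldsymbol{1}}(v) = Q_H \cdots Q_1$ with each $Q_l := \pi_{\boldsymbol{t}_l}(w_l) \in \mathbb{R}[(\boldsymbol{x},\boldsymbol{y})^{\boldsymbol{t}_l}]_{\boldsymbol{k}_l-\boldsymbol{1}}$. Since this ring is a UFD, the strategy is to show that for a generic layer tuple each $Q_l$ is already irreducible; once this is established, two tuples mapping to the same $v$ must, by unique factorization, differ only by layer-wise rescaling together with a permutation that sends each $Q_l$ into a subring $\mathbb{R}[(\boldsymbol{x},\boldsymbol{y})^{\boldsymbol{t}_{l'}}]_{\boldsymbol{k}_{l'}-\boldsymbol{1}}$ of the same type. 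Such a permutation is only possible when $(\boldsymbol{t}_l,\boldsymbol{k}_l)=(\boldsymbol{t}_{l'},\boldsymbol{k}_{l'})$, matching precisely the ``reordering of the layer factors'' allowed by the statement and so yielding birationality of $\mu_\mathbb{P}$ modulo this finite group action.

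\textbf{Generic irreducibility of $Q_l$ (key step).} After the monomial substitution $X_m := x_m^{t_l^{(m)}}$, $Y_m := y_m^{t_l^{(m)}}$, the polynomial $Q_l$ becomes an arbitrary multihomogeneous polynomial of multidegree $\boldsymbol{k}_l-\boldsymbol{1}$ in the $D$ variable pairs $(X_m,Y_m)$, and by hypothesis at least two entries of this multidegree are positive. Since multihomogeneity is inherited by irreducible factors, any reducible $Q_l$ factors as $Q'Q''$ with multihomogeneous pieces of multidegrees $\boldsymbol{a}'+\boldsymbol{a}''=\boldsymbol{k}_l-\boldsymbol{1}$ and both nonzero. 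For each such non-trivial splitting, the image of the multiplication map has dimension at most $\dim V_{\boldsymbol{a}'} + \dim V_{\boldsymbol{a}''} - 1$, the ``$-1$'' accounting for the joint rescaling $(Q',Q'')\mapsto(\lambda Q',\lambda^{-1}Q'')$. Using $\dim V_{\boldsymbol{a}}=\prod_m(a_m+1)$, a short combinatorial argument leveraging the presence of at least two positive entries in $\boldsymbol{k}_l-\boldsymbol{1}$ yields the strict inequality $\dim V_{\boldsymbol{a}'} + \dim V_{\boldsymbol{a}''} - 1 < \dim V_{\boldsymbol{k}_l-\boldsymbol{1}}$ for every non-trivial splitting. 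Taking the finite union of these proper closed subsets shows that reducibility is a proper Zariski-closed condition, so a generic $Q_l$ is irreducible.

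\textbf{Main obstacle.} The irreducibility step is where I expect the work to live: its failure modes are exactly the pathological cases already highlighted by the paper, namely layers with only one active variable pair (as in $D=1$, where homogeneous polynomials in two variables always split into linear factors) and layers of identical subring type (which intrinsically force a reordering ambiguity). The dimension inequality $\dim V_{\boldsymbol{a}'}+\dim V_{\boldsymbol{a}''}-1<\dim V_{\boldsymbol{k}_l-\boldsymbol{1}}$ is precisely where the hypothesis ``$D>1$ and at least two $k_l^{(m)}>1$ in every layer'' is consumed, so carefully pinning it down — together with checking that the Zariski-open set of irreducible layer tuples also has dense image in $\mathcal{M}$, so that the UFD-based uniqueness applies to a generic end-to-end filter — constitutes the technical heart of the proof. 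Once it is in hand, the UFD argument from the first paragraph closes the corollary without further subtlety.
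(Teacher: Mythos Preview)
Your global strategy coincides with the paper's: translate via Lemma~\ref{lem:polynomialMultiplication} to polynomial multiplication in $\mathbb{R}[x_1,y_1,\ldots,x_D,y_D]$, argue that each layer factor $Q_l$ is generically irreducible, and then invoke unique factorization to recover the layer filters up to scalars and swaps. The paper's proof follows exactly this outline, citing Bertini's theorem for the irreducibility step.

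There is, however, a genuine gap in your ``key step''. Your dimension count is carried out \emph{after} the substitution $X_m:=x_m^{t_l^{(m)}}$, $Y_m:=y_m^{t_l^{(m)}}$, so the multidegrees $\boldsymbol{a}'+\boldsymbol{a}''=\boldsymbol{k}_l-\boldsymbol{1}$ and the spaces $V_{\boldsymbol{a}}$ refer to the ring $\mathbb{R}[X_1,Y_1,\ldots,X_D,Y_D]$. This establishes generic irreducibility of $Q_l$ only in \emph{that} ring. But the UFD in which your unique-factorization argument must take place is $\mathbb{R}[x_1,y_1,\ldots,x_D,y_D]$, where all the $Q_l$'s (with different $\boldsymbol{t}_l$) coexist. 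Irreducibility in the substituted ring does not transfer: $X_1-Y_1$ is irreducible, yet $x_1^2-y_1^2=(x_1-y_1)(x_1+y_1)$ is not. For any layer with $\boldsymbol{t}_l\neq\boldsymbol{1}$ (which is every layer beyond the first once some stride exceeds one), a factorization of $Q_l$ in $\mathbb{R}[x,y]$ need not respect the power structure, so its factors need not lie in any $V_{\boldsymbol{a}}$ in the $X,Y$ sense, and your count simply does not see these factorizations.

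The paper's invocation of Bertini handles this cleanly because the linear system $\mathbb{R}[(\boldsymbol{x},\boldsymbol{y})^{\boldsymbol{t}_l}]_{\boldsymbol{k}_l-\boldsymbol{1}}$ is considered directly on the ambient variety in the $x,y$ variables. If you want to rescue the dimension-count route, you would have to run it in the $x,y$ ring: for each splitting $\boldsymbol{b}'+\boldsymbol{b}''$ of the full multidegree $(t_l^{(m)}(k_l^{(m)}-1))_m$, show that the image of the multiplication map intersects the \emph{sparse} subspace $\mathbb{R}[(\boldsymbol{x},\boldsymbol{y})^{\boldsymbol{t}_l}]_{\boldsymbol{k}_l-\boldsymbol{1}}$ in a proper subset. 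That intersection step is not automatic from the crude bound $\dim V_{\boldsymbol{b}'}+\dim V_{\boldsymbol{b}''}-1$ and is the missing ingredient in your proposal.
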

% \begin{proof}
%     Our assumption means that, for every layer $l$, at least two variable pairs $(x_{m_1},y_{m_1})$ and $(x_{m_2},y_{m_2})$ appear in the polynomials in $\mathbb{R}[(\boldsymbol{x},\boldsymbol{y})^{\boldsymbol{t}_l}]_{\boldsymbol{k}_l-\boldsymbol{1}}$ that correspond to the $l$-th layer filter.
%     Hence, the polynomials in $\mathbb{R}[(\boldsymbol{x},\boldsymbol{y})^{\boldsymbol{t}_l}]_{\boldsymbol{k}_l-\boldsymbol{1}}$ are generically irreducible. (This is a standard fact from algebraic geometry, following from Bertini's theorem; see the proof of \cite[Corollary~4.16]{kohn2022geometry} for a formal argument.)
%     Therefore, for a generic filter tuple $\theta=(w_1, \ldots, w_H)$, the unordered set of filters  $\{w_1, \ldots, w_H \}$ can be recovered (up to scalars) from the end-to-end filter $\mu(\theta)$ by computing the factorization of $\pi_{\boldsymbol{1}}(\mu(\theta))$ into its irreducible factors $\{ \pi_{\boldsymbol{t}_1}(w_1), \ldots, \pi_{\boldsymbol{t}_H}(w_H) \}$.
% \end{proof}

\begin{observation} \label{obs:swap}
\begin{enumerate}
    \item[a)]
    The ordering of the filters in Cor. \ref{cor:birationalHighDimensions} is \emph{not} uniquely determined if and only if the spaces $\mathbb{R}[(\boldsymbol{x},\boldsymbol{y})^{\boldsymbol{t}_l}]_{\boldsymbol{k}_l-\boldsymbol{1}}$ and $\mathbb{R}[(\boldsymbol{x},\boldsymbol{y})^{\boldsymbol{t}_L}]_{\boldsymbol{k}_L-\boldsymbol{1}}$ for two layers $l<L$ coincide.
    In this case,
     \begin{align}\label{eq:swap}
        \pi_{\boldsymbol{t}_l}(w_L) = \pi_{\boldsymbol{t}_L}(w_L), \, \pi_{\boldsymbol{t}_L}(w_l) = \pi_{\boldsymbol{t}_l}(w_l),
    \end{align} and so Lemma~\ref{lem:polynomialMultiplication} implies that we can 
     swap the filters $w_l$ and $w_L$ without affecting the image $\mu(w_1, \ldots, w_H)$. Note that \eqref{eq:swap} requires  $\boldsymbol{t}_l = \boldsymbol{t}_L$, which for $l < L$ can only happen if $D > 1$.
     \item[b)] The NTK is not affected by the swap described in a). 
     To see this, we consider $\theta = (w_1, \ldots, w_H)$ and $\theta'$ that is obtained from  $\theta$ by swapping $w_l$ with $w_L$.
     Observation~\ref{obs:K} also holds for convolutions on higher-dimensional signals as it only depends on the multilinearity of $\mu$.
     So we consider the block structure of the Jacobian matrices $J_\theta = [J_{\theta,1}|\cdots|J_{\theta,H}]$ and $J_{\theta'} = [J_{\theta',1}|\cdots|J_{\theta',H}]$.
     By Observation~\ref{obs:K} and Lemma~\ref{lem:polynomialMultiplication}, the matrix $J_{\theta,i}$ encodes the linear map that multiplies $\pi_{\boldsymbol{t}_i}(\cdot)$ with all polynomials $\pi_{\boldsymbol{t}_j}(w_j)$ for $j \neq i$.
     Therefore, \eqref{eq:swap} implies that $J_{\theta',l} = J_{\theta,L}$, $J_{\theta',L} = J_{\theta,l}$, and $J_{\theta',i} = J_{\theta,i}$ for $i \notin \{l,L \}$.
     Thus, we conclude 
     $K_{\theta'} = \sum_{i=1}^H J_{\theta',i}J_{\theta',i}^\top = \sum_{i=1}^H J_{\theta,i}J_{\theta,i}^\top= K_\theta$.
    \item[c)]
    $\mathbb{R}[(\boldsymbol{x},\boldsymbol{y})^{\boldsymbol{t}_l}]_{\boldsymbol{k}_l-\boldsymbol{1}}$ coincides with $\mathbb{R}[(\boldsymbol{x},\boldsymbol{y})^{\boldsymbol{t}_L}]_{\boldsymbol{k}_L-\boldsymbol{1}}$ if and only if $\boldsymbol{k}_l = \boldsymbol{k}_L$ and, for  all $m = 1, \ldots, D$ with 
    $k^{(m)}_l > 1$, we have $t^{(m)}_l = t^{(m)}_L$.
    The latter condition means that the strides $s^{(m)}_l, \ldots, s^{(m)}_{L-1}$ are equal to one.
\end{enumerate}
\end{observation}

As in the case of one-dimensional signals, the scaling ambiguity that appears in Corollary~\ref{cor:birationalHighDimensions} is fixed (up to signs) by knowing the invariants in \eqref{eq:InvConvolution}.

\begin{proposition}
    \label{prop:correctScalingHighDimension} 
    Let $\delta_1, \ldots, \delta_{H-1} \in \mathbb{R}$, and consider non-zero filter tensors $w_1 \in  \mathbb{R}^{k_1^{(1)} \times \ldots \times k_1^{(D)}}$, \ldots, $w_H \in \mathbb{R}^{k_H^{(1)} \times \ldots \times k_H^{(D)}}$.
    There are $2^{H-1}$ scalar tuples $(\lambda_1, \ldots, \lambda_H) \in \mathbb{R}^H$ such that $\lambda_1 \cdots \lambda_H = 1$ and 
    \begin{align*}
        \delta_i = \Vert \lambda_{i+1} w_{i+1} \Vert_F^2 -  \Vert \lambda_i w_{i} \Vert_F^2, \   \text{ for } i = 1, \ldots, H-1.
    \end{align*}
    These scalar tuples are all equal up to signs. 
\end{proposition}
% \begin{proof}
%     The same proof as for Proposition~\ref{prop:correctScaling} applies.
% \end{proof}

Putting everything together, we can now conclude for convolutions on higher-dimensional signals that the NTK only depends on the end-to-end filter $v \in \mathcal{M}$ and the invariants $\delta_1, \ldots, \delta_{H-1}$, no matter whether some strides are equal to one.

\begin{theorem}\label{thm:mainHigherDimension}
        Let $D >1$, and assume that every layer
    has at least two filter sizes larger than one.
    For any    $\delta = (\delta_1, \ldots, \delta_{H-1}) \in \mathbb{R}^{H-1}$ and any end-to-end function $v$ in the smooth locus of the neuromanifold $\mathcal{M}$, there are $2^{H-1}$ many parameters $\theta = (w_1, \ldots, w_H)$ in $\mu^{-1}(v)$ that satisfy \eqref{eq:InvConvolution}; they all have the same NTK $K_\theta$. 
    For fixed $\delta$, these NTKs are a Riemannian metric on the smooth locus of $\mathcal{M}$.
\end{theorem}
% \begin{proof}
%     We see from Lemma~\ref{lem:polynomialMultiplication} that every fiber of the projectivization $\mu_\mathbb{P}$ of $\mu$ is finite.
%     Moreover, Corollary~\ref{cor:birationalHighDimensions} says that, after possibly modding out some permutations of the input layers (cf. Observation~\ref{obs:swap}), the map $\mu_\mathbb{P}$ becomes birational (i.e., its generic fiber is a singleton).
%     For such a map, a standard fact from algebraic geometry~\cite[Lemma~3.2]{kohn2017secants} tells us that the image of $\mu_\mathbb{P}$ (over the complex numbers) is smooth precisely at those points that have a single preimage under $\mu_\mathbb{P}$ (modulo the necessary layer permutations) and where the derivative of $\mu_\mathbb{P}$ at that preimage has maximal rank.  
%     In other words,
%     for every end-to-end filter $v$ in the smooth locus of $\mathcal{M}$, there is a unique filter tuple $\theta$, up to scalars and possibly some layer permutations, that parametrizes $v$; moreover, $d_\theta\mu$ has maximal rank.
%     By Proposition~\ref{prop:correctScalingHighDimension}, the invariants $\delta$ fix the scalars in the filter tuple $\theta$, up to signs. 
%     The signs and the possible layer permutations do not affect the neural tangent kernel $K_\theta$, by Observations~\ref{obs:K} and~\ref{obs:swap}.
%     Finally, the proofs of Lemma~\ref{lem:submersion} and Corollary~\ref{cor:RiemannianMetric} hold verbatim.
% \end{proof}

\section{FULLY-CONNECTED NETWORKS}
\label{sec:fullyConn}

We now move to the case of fully-connected linear networks. As seen in Section~\ref{sec:learning-networks}, the NTK is parameter-independent for balanced initialization.
Moreover, the polynomials  \eqref{eq:balancedFullyConnected} generate \emph{all} algebraic relations that stay constant along the gradient flow curve for generic initialization \citep{marcotte2024abide}. Hence,
a natural question is whether, like for convolutional networks with strides larger than 1, for a generic choice of $\Delta_i$, the NTK is also parameter-independent. While the form \eqref{NTK-fully-general} of the NTK suggests that this is not the case in general, we show this rigorously in Example \ref{ex:fullyconnected} in the Appendix.
%We prove in the following that this is in general not the case.
% We first start by proving it for a particular choice of $\Delta$ for one hidden layer.
% \begin{example}\label{ex:fullyconnected}
%     For $\Delta_1=diag(0,15/4)$, the neural tangent kernel is parameter-dependent.
%     Indeed, let the end-to-end product matrix be equal to the identity $W=I$. Consider $V_2=\operatorname{diag}(1,2)$ and $V_1=\operatorname{diag}(1,1/2)$. Hence $V_2V_1=I_2=W$ and the balancedness constant matrix is indeed equal to $V_2^TV_2 - V_1 V_1^T = \operatorname{diag}(0,15/4) = \Delta_1$.

% Consider then $U_2 = \left( \begin{array}{c c}
%     0 & 2 \\
%     1 & 0
% \end{array} \right)$ and $U_1 = \left(\begin{array}{c c}
%     0 & 1 \\
%     1/2 & 0
% \end{array} \right)$, we have again $U_2U_1=I_2$ and $U_2^TU_2 - U_1 U_1^T = \operatorname{diag}(0,15/4)=\Delta_1$. However, using the formula \eqref{NTK-fully-general} for the NTK in this case, we have $ K^{(\Delta_1)}_{V_2,V_1} = \nabla_W \ell(W) V_1^T V_1 + V_2 V_2^T \nabla_W \ell(W)  \neq \nabla_W \ell(W) U_1^T  U_1 + U_2 U_2^T \nabla_W \ell(W) = K^{(\Delta_1)}_{U_2,U_1}$. 
% %Therefore, because it $W$ follows the ODE \eqref{eq:ODEfullyconnectedlinearnet}, For $K^{(\Delta_1)}_{U_2,U_1} \neq K^{(\Delta_1)}_{V_2,V_1} $. 
% Hence the neural tangent kernel is parameter-dependent. Using Remark \ref{rk:map algebraicity}, we conclude that this is the case for almost all $\Delta_1$. We can also easily generalize this example to any depth and width of the network.
% \end{example}

We now go back to the balanced case, and provide a geometric reason for why  the NTK is parameter-independent in this case.
The following result shows that the balanced weight matrices that lead to a fixed product matrix $W$ differ only by orthogonal group action. The latter are isometries of the Euclidean metric on the parameter space and thus do not affect the NTK. Moreover, analogously to Corollary \ref{cor:RiemannianMetric}, the inverses of the NTK are the pushforward of the Euclidean metric on the space $\Theta_0$ of balanced matrix tuples under the network parametrization map $\mu$, which is simply the quotient metric under the orthogonal action.
\begin{theorem}\label{thm:fullyconnected}
    Let $(W_H,\ldots,W_1) \in GL_d(\mathbb{R})^H$ such that $W=W_H \cdots W_1$, and $\Delta_i = W_{i+1}^\top W_{i+1} - W_{i}W_i^\top=0$ for all $i \in \{1,\ldots,H-1\}$. Then the set of balanced weights with product  $W$  is equal to
    \begin{align*}
        &  \mu^{-1}(W) \cap \Theta_0 = \left\{  (W_HG_{H-1},G_{H-1}^{T}W_{H-1}G_{H-2}, \right. \\ & \left. \ldots,   G_{2}^{T}W_{2}G_{1},   G_{1}^{T}W_{1}) \mbox{ s.t. } G_{H-1},\ldots,G_1 \in \mathcal{O}_d(\mathbb{R}) \right\}.
    \end{align*}
\end{theorem}

\section{EXTENSION TO RELU NETWORKS} \label{sec:relu}
We illustrate in the case of shallow (i.e., two-layer) networks how the results presented in this work can be extended from linear to ReLU networks. 

\paragraph{Shallow ReLU CNN.} The NTK depends only on the end-to-end function, which we see as follows: The first observation is that the first convolutional layer is either zero or a linear map of full rank \cite[Lem. 4.1]{shahverdigeometry}. Hence, for a nonzero network, there is a (Euclidean) open subset in the domain of the network where its first layer maps to the positive orthant, and so the network restricted to that domain subset is simply a shallow linear CNN. Hence, our factorization results on linear CNNs also apply to this ReLU case. In particular, for higher-dimensional convolutions, the filters in the two layers are generically uniquely identifiable (up to scaling) from the end-to-end function (as in Cor. \ref{cor:birationalHighDimensions}). The second observation is that the $\delta$ invariants are still valid in the ReLU case, and - as in the linear case - they fix the scalings of the filters up to sign (as in Prop. \ref{prop:correctScalingHighDimension}). Finally, it is easy to see that the sign of the first layer cannot be flipped without changing the end-to-end function. All in all (except in degenerate cases of dimension 1 or stride 1), for a nonzero shallow ReLU CNN, there is only a single choice of parameters (i.e., filters) yielding the same end-to-end function. Hence, the NTK (trivially) only depends on the end-to-end~function.

 \paragraph{Shallow ReLU MLP.} The  argument above does not apply to fully-connected networks (MLPs). Already the first observation fails: Since fully-connected layers can have low rank, their range can completely miss the positive orthant. In fact, the study of parameter symmetries in ReLU MLPs is very subtle and not completely settled \cite{grigsby23a,ELISENDAGRIGSBY2025110636}. Either way, the NTK of ReLU MLPs is typically not parameter independent, even for gradient-flow  invariants. To see this, we  consider the shallow network in Example \ref{ex:fullyconnected}. By \cite{marcotte2024abide}, the conserved quantities via gradient flow for shallow ReLU networks are a subset of the linear ones, namely $\mathrm{diag}(W_2^\top W_2 - W_1W_1^\top)$. Therefore, if we restrict the input space to positive $X$, then -- as in the linear case -- both $U_2,U_1$ and $V_2,V_1$ 
 from Example \ref{ex:fullyconnected} give rise to the same end-to-end function while also having equal conserved quantities. We have seen in Example \ref{ex:fullyconnected} that they lead to different values for the linear NTK, and thus also their ReLU NTKs are different.

\section{CONCLUSION}
In this paper, we investigated how gradient flow on parameter space influences the dynamics of the end-to-end function in linear networks. We showed that, for a broad class of linear convolutional networks, the NTK -- and thus the corresponding function-space ODE -- can be expressed independently of the individual layer parameters, relying instead on a global quantity defined at initialization, leading to a Riemaniann gradient flow.
In contrast, we demonstrated that this property does not generally extend to fully connected linear networks, highlighting a fundamental structural difference between the two architectures. 
We have further explained how these results on deep linear networks extend to shallow ReLU networks. 
Our findings raise intriguing questions about whether similar principles might emerge in the context of more practical, nonlinear networks.

\section*{Acknowledgements}
We thank Antonio Lerario and Giovanni Luca Marchetti for helpful discussions.
 KK was supported by the Wallenberg AI, Autonomous Systems and Software Program (WASP) funded by the Knut and Alice Wallenberg Foundation.
 EMA and HR acknowledge funding by the Deutsche Forschungsgemeinschaft (DFG, German Research Foundation) - Project number 442047500 through the Collaborative Research Center ``Sparsity and Singular Structures'' (SFB 1481). EMA acknowledges funding by UM6P as well.

%%%%%%%%%%%%%%%%%%%%%%%%%%%%%%%%%%%%%%%%%%%%%%%%%%%%%%%%%%%%
\bibliography{refs}
\bibliographystyle{plainnat}

\newpage 

\section*{Checklist}

% %%% BEGIN INSTRUCTIONS %%%
The checklist follows the references. For each question, choose your answer from the three possible options: Yes, No, Not Applicable.  You are encouraged to include a justification to your answer, either by referencing the appropriate section of your paper or providing a brief inline description (1-2 sentences). 
Please do not modify the questions.  Note that the Checklist section does not count towards the page limit. Not including the checklist in the first submission won't result in desk rejection, although in such case we will ask you to upload it during the author response period and include it in camera ready (if accepted).

\begin{enumerate}

  \item For all models and algorithms presented, check if you include:
  \begin{enumerate}
    \item A clear description of the mathematical setting, assumptions, algorithm, and/or model. [Yes]
    \item An analysis of the properties and complexity (time, space, sample size) of any algorithm. [Not Applicable]
    \item (Optional) Anonymized source code, with specification of all dependencies, including external libraries. [Not Applicable]
  \end{enumerate}

  \item For any theoretical claim, check if you include:
  \begin{enumerate}
    \item Statements of the full set of assumptions of all theoretical results. [Yes]
    \item Complete proofs of all theoretical results. [Yes]
    \item Clear explanations of any assumptions. [Yes]     
  \end{enumerate}

  \item For all figures and tables that present empirical results, check if you include:
  \begin{enumerate}
    \item The code, data, and instructions needed to reproduce the main experimental results (either in the supplemental material or as a URL). [Not Applicable]
    \item All the training details (e.g., data splits, hyperparameters, how they were chosen). [Not Applicable]
    \item A clear definition of the specific measure or statistics and error bars (e.g., with respect to the random seed after running experiments multiple times). [Not Applicable]
    \item A description of the computing infrastructure used. (e.g., type of GPUs, internal cluster, or cloud provider). [Not Applicable]
  \end{enumerate}

  \item If you are using existing assets (e.g., code, data, models) or curating/releasing new assets, check if you include:
  \begin{enumerate}
    \item Citations of the creator If your work uses existing assets. [Not Applicable]
    \item The license information of the assets, if applicable. [Not Applicable]
    \item New assets either in the supplemental material or as a URL, if applicable. [Not Applicable]
    \item Information about consent from data providers/curators. [Not Applicable]
    \item Discussion of sensible content if applicable, e.g., personally identifiable information or offensive content. [Not Applicable]
  \end{enumerate}

  \item If you used crowdsourcing or conducted research with human subjects, check if you include:
  \begin{enumerate}
    \item The full text of instructions given to participants and screenshots. [Not Applicable]
    \item Descriptions of potential participant risks, with links to Institutional Review Board (IRB) approvals if applicable. [Not Applicable]
    \item The estimated hourly wage paid to participants and the total amount spent on participant compensation. [Not Applicable]
  \end{enumerate}

\end{enumerate}

\onecolumn
\aistatstitle{Supplementary Material:  The Riemannian Geometry Associated to Gradient Flows of Linear Convolutional Networks}

\section{PROOFS AND EXAMPLES}

\subsection{Proofs and Examples of Section \ref{sec:oneDim}}

\subsubsection{Proof of Proposition \ref{prop:correctScaling}}
\begin{proof}
    This proof is inspired by the proof of \cite[Corollary~5.14]{kohn2022geometry}.
    Define $C := \prod_{l=1}^H \Vert w_l \Vert^2$
    and $\beta_l := \Vert \lambda_l w_l \Vert^2 = \lambda_l^2 \Vert w_l \Vert^2$ for all $l = 1, \ldots, H$.
    Then we have reformulated our problem in that we want to show that the conditions  
    \begin{align}\label{eq:scalingConditions}
        \prod_{l=1}^H \beta_l = C \text{ and }
        \beta_{i+1} - \beta_i = \delta_i \text{ for } i = 1, \ldots, H-1
    \end{align}
    have only one solution $\beta_1, \ldots, \beta_H$ where all $\beta_l$ are positive.
    All $\beta_l$ can be computed from $\beta_1$ via $\beta_l = \beta_1 + \sum_{i=1}^{l-1} \delta_i$. 
    Hence, the conditions \eqref{eq:scalingConditions} can be rewritten as a single equation
    \begin{align}\label{eq:singleEquation}
        \beta_1 \cdot (\beta_1 + \delta_1) \cdots (\beta_1 + \delta_1 + \ldots + \delta_{H-1}) = C.
    \end{align}
    Now we choose a sufficiently large $n_1 > 0$ such that $n_l := n_1 + \sum_{i=1}^{l-1} \delta_i \geq 0$ for all $l = 2, \ldots, H$.
    We define the new unknown $x := \beta_1 - n_1$ such that  the equality \eqref{eq:singleEquation} becomes
    \begin{align}
        \label{eq:finalEquation}
        (x+n_1) (x+n_2) \cdots (x+n_H) = C.
    \end{align}
     Defining $n := \min \{ n_1, \ldots, n_H \} \geq 0$, we observe that the left-hand side of \eqref{eq:finalEquation} is positive and monotonically increasing in the range $(-n, +\infty)$.
    Since the left-hand side is zero for $x=-n$ and converges to $+\infty$ for $x \to \infty$ and $C > 0$, the intermediate value theorem tells us that there is precisely one $x$ in the range $(-n, +\infty)$ that yields the equality in \eqref{eq:finalEquation}.
    By definition of $n$, this is the only solution where all $\beta_l = x+n_l$ are positive. 
\end{proof}

\subsubsection{Proof of Theorem \ref{thm:mainStrideLargerOne} and Example}
\begin{proof}
Under the assumption that $s_l > 1$ for all $l = 1, \ldots, H-1$, every point $[v]$ in the smooth locus of $\mathbb{P}\mathcal{M}$ has a unique point in its fiber $\mu_\mathbb{P}^{-1}([v])$ \cite[Section~5]{kohn2024function}.
In other words, for every end-to-end filter $v$ in the smooth part of $\mathcal{M}$, there is a unique filter tuple, up to scaling, that parametrizes~$v$.
Together with Proposition~\ref{prop:correctScaling}, this shows that, for fixed $\delta_1, \ldots, \delta_{H-1}$, every end-to-end filter $v$ in the smooth locus of the neuromanifold $\mathcal{M}$ has a unique filter tuple $\theta = (w_1, \ldots, w_H)$, up to signs, such that $\mu(\theta)=v$ and the invariants in \eqref{eq:InvConvolution} are satisfied. 
The signs do not affect the neural tangent kernel $K_\theta$ since, due to Observation~\ref{obs:K}, $K_\theta$ is the sum of matrices $K_i$ that are homogeneous of degree two in the filters in $\theta$.
\end{proof}

\begin{example}
    \label{ex:running}
    The following network architecture will serve as a running example. We consider a network with two layers whose filters have sizes $k_1 = 3, k_2=2$ and with strides $s_1 = 2, s_2 = 1$.  For ease of notation we denote the first filter by $a=(a_0,a_1,a_2) \in \mathbb{R}^3$ and the second filter by $b=(b_0,b_1) \in \mathbb{R}^2$. 
The filter of the end-to-end convolution is
\begin{align*}%\label{eq:muExample}
    \mu(a,b) =  (a_0b_0,a_1b_0,a_2b_0 + a_0b_1, a_1b_1,a_2b_1).
\end{align*}
According to \cite[Example~4.12]{kohn2022geometry}, the neuromanifold is
\begin{align*}
    \mathcal{M} = \{ v \in \mathbb{R}^5 \mid v_0 v_3^2 + v_1^2v_4 - v_1v_2v_3 =0, \, v_2^2 - 4v_0v_4 \geq 0 \}.
\end{align*}
Thus, its Zariski closure is the hypersurface in $\mathbb{R}^5$ defined by $v_0 v_3^2 + v_1^2v_4 - v_1v_2v_3 =0$, that is singular at all points of the form $v = (v_0,0,v_2,0,v_4)$.

Now, we consider a smooth point $ v \in \mathcal{M}$ (i.e., $v_1 \neq 0$ or $v_3 \neq 0$).
As in the proof of Theorem~\ref{thm:mainStrideLargerOne}, there is~-- up to scaling~-- only one choice of filters $(a,b)$ such that $\mu(a,b) = v$. 
If both $v_1 \neq 0$ and $v_3 \neq 0$, then this choice is
\begin{align}\label{eq:substByV}
    a = (\frac{v_0}{v_1}, 1, \frac{v_4}{v_3}) \quad \text{ and } \quad
    b = (v_1,v_3).
\end{align}
If precisely one of $v_1$ and $v_3$ is zero, then the preimage filters are $a=(v_2,v_3,v_4)$ and $ b=(0,1)$ for $v_1=0$, or $a=(v_0,v_1,v_2)$ and $ b=(1,0)$ for $v_3=0$.

Next, for a fixed $\delta \in \mathbb{R}$, we compute the possible scalars $\lambda$ such that $\delta = \Vert \frac{1}{\lambda} b  \Vert^2 - \Vert \lambda a \Vert^2$.
We follow the proof of Proposition~\ref{prop:correctScaling}.
For 2-layer networks, we see that \eqref{eq:singleEquation} becomes a quadratic equation in $\lambda^2$:
\begin{align*}
    \lambda^2 \Vert a \Vert^2 \cdot (\lambda^2 \Vert a \Vert^2 + \delta) = \Vert a \Vert^2 \Vert b \Vert^2.
\end{align*}
Its only positive solution is 
$\lambda^2 = \frac{-\delta + \sqrt{\delta^2+4\Vert a \Vert^2 \Vert b \Vert^2}}{2\Vert a \Vert^2 }$.

Theorem~\ref{thm:mainStrideLargerOne} tells us that we can compute 
$K^{(\delta)}(v)$
as $K_{(\lambda a, \frac{1}{\lambda} b)} = \frac{1}{\lambda^2} K_1 + \lambda^2 K_2$, where
\begin{align*} %\small
    K_1 = \left( \begin{array}{c c c c c}
         b_0^2 & 0 & b_0b_1 & 0 & 0 \\
         0 & b_0^2 & 0 & b_0b_1 & 0 \\
         b_0b_1 & 0 & b_0^2 + b_1^2 & 0 & b_0b_1 \\
         0 & b_0b_1 & 0 & b_1^2 & 0 \\
         0 & 0 & b_0b_1 & 0 & b_1^2
    \end{array} \right)
    \text{ and }
    K_2 = \left( \begin{array}{c c c c c}
         a_0^2 & a_0a_1 & a_0a_2 & 0 & 0 \\
         a_0a_1 & a_1^2 & a_1a_2 & 0 & 0 \\
         a_0a_2 & a_1a_2 & a_0^2 + a_2^2 & a_0a_1 & a_0a_2 \\
         0 & 0 & a_0a_1 & a_1^2 & a_1a_2 \\
         0 & 0 & a_0a_2 & a_1a_2 & a_2^2
    \end{array} \right).
\end{align*}
This only depends on $v$ and $\delta$.
For instance, if we assume that both $v_1\neq 0$ and $v_3\neq 0$, then $K_1$ and $K_2$ can be expressed only in terms of $v$ using the substitution \eqref{eq:substByV}.
Similarly, $\lambda^2$ can be written in terms of $v$ and $\delta$.
In the balanced case, this simplifies to 
$\lambda^2 = \frac{\Vert b \Vert}{\Vert a \Vert} = \sqrt{\frac{v_1^2+v_3^2}{ \frac{v_0^2}{v_1^2} + 1 + \frac{v_4^2}{v_3^2} }}$.
$\hfill \diamondsuit$
\end{example}

\subsubsection{Proof of Lemma \ref{lem:submersion}}
\begin{proof}
	Since $\mu(\theta)$ is a smooth point of $\mathcal{M}$,  the derivative $d_\theta \mu$ is surjective onto $T_{\mu(\theta)} \mathcal{M}$, and its kernel is spanned by $\dot \theta_1 := (-w_1,w_2,0,\ldots,0), \ldots, \dot \theta_{H-1} := (0,\ldots,0,-w_{H-1},w_H)$, see \cite{kohn2024function}.
	Due to $\dim(\Theta_\delta) = \dim(\mathcal{M})$, it is sufficient to show that this kernel intersects the tangent space $T_\theta \Theta_\delta$ only at $0$. 
	For that, we consider a point $\lambda_1 \dot \theta_1 + \ldots + \lambda_{H-1} \dot \theta_{H-1}$ that is contained in $T_\theta \Theta_\delta$.
	Our aim is to show that all $\lambda_i$ must be equal to $0$.
	
	We begin by proving inductively that each $\lambda_i$ is of the form $\lambda_1 \cdot c_i$ for some positive scalar $c_i$.
	This is immediate for $i=1$. 
	For $i=2$, we plug in the point $\sum_j \lambda_j \dot \theta_j$ into the first condition from \eqref{eq:tangentDelta}.
	This yields the equality $(\lambda_1 - \lambda_2) \Vert w_2 \Vert^2 = -\lambda_1 \Vert w_1 \Vert^2 $.
	Therefore, $\lambda_2 = \lambda_1 \cdot c_2$, where $c_2 := \frac{\Vert w_1 \Vert^2 + \Vert w_2 \Vert^2}{\Vert w_2 \Vert^2}>0$.
	Similarly, for $i>2$, the conditions in \eqref{eq:tangentDelta} yield
	$(\lambda_{i-1}-\lambda_i) \Vert w_i \Vert^2 = - \lambda_1 \Vert w_1 \Vert^2$, which can be rewritten as 
	$\lambda_i = \frac{1}{\Vert w_i \Vert^2} (\lambda_{i-1} \Vert w_i \Vert^2 + \lambda_1 \Vert w_1 \Vert^2)$.
	Applying the induction hypothesis, we obtain
	$\lambda_i = \lambda_1 \cdot c_i$, where
	$c_i := \frac{1}{\Vert w_i \Vert^2} (c_{i-1} \Vert w_i \Vert^2 + \Vert w_1 \Vert^2)>0$.
	
	On the other hand, the last condition in \eqref{eq:tangentDelta} gives us that $\lambda_{H-1} \Vert w_H \Vert^2 = - \lambda_1 \Vert w_1 \Vert^2$.
	Thus, we have $\lambda_1 \cdot c_{H-1} = \lambda_{H-1} = - \lambda_1 \cdot \frac{\Vert w_1 \Vert^2}{\Vert w_H \Vert^2}$, which shows that $\lambda_1$ must be $0$, and so all $\lambda_i$ are $0$.
\end{proof}

\subsubsection{Proof of Corollary \ref{cor:RiemannianMetric} and Examples}

\begin{proof}
	The first part of the claim follows immediately from Lemma~\ref{lem:submersion} and Theorem~\ref{thm:mainStrideLargerOne}.
	More concretely, Lemma~\ref{lem:submersion} states that the derivatives of $\mu_\delta := \mu |_{\Theta_\delta}$ are bijective.
	For such a submersion with finite fibers, the pushforward metric can be computed via averaging over the metrics on the fibers:	\begin{align*}
		\Vert \dot v \Vert_{\mu_\delta}^2 
		:= & \frac{1}{|\mu_\delta^{-1}(v)|} \sum_{\theta \in \mu_\delta^{-1}(v)} \Vert (d_\theta \mu_\delta)^{-1} (\dot v) \Vert^2
	\\	= & \frac{1}{|\mu_\delta^{-1}(v)|} \sum_{\theta \in \mu_\delta^{-1}(v)} ((d_\theta \mu_\delta)^{-1} (\dot v))^\top (d_\theta \mu_\delta)^{-1} (\dot v)
		\\ = & \frac{1}{|\mu_\delta^{-1}(v)|}  \sum_{\theta \in \mu_\delta^{-1}(v)}  \dot v^\top (J_\theta J_\theta^\top)^{-1} \dot v 
		=\frac{1}{|\mu_\delta^{-1}(v)|}  \sum_{\theta \in \mu_\delta^{-1}(v)}  \dot v^\top K_\theta^{-1} \dot v,
	\end{align*}
	where $\dot v \in T_v \mathcal{M}$ for some $v \in \mathrm{Reg}(\mathcal{M})$.
	By	Theorem~\ref{thm:mainStrideLargerOne},  all $\theta \in \mu_\delta^{-1}(v)$  have the same neural tangent kernel $K_\theta = K^{(\delta)}(v)$,
	and so we conclude
	that the pushforward metric is 
	$\Vert \dot v \Vert_{\mu_\delta}^2 = \dot v^\top (K^{(\delta)}(v))^{-1} \dot v$.
\end{proof}

\begin{example}\label{ex:singular}
We consider the network architecture from Example~\ref{ex:running}.
Recall that the singular points of the neuromanifold $\mathcal{M}$ are of the form $v = (v_0,0,v_2,0,v_4)$.
They are parametrized by filters of the form $a = (a_0,0,a_2)$ and $b=(b_0,b_1)$.

    If $(a_0,a_2)$ and $(b_0,b_1)$ are linearly independent, 
    then the resulting end-to-end filter $v$ has two distinct (up to scaling) parametrizations, namely
    $v = \mu(a_0,0,a_2,b_0,b_1) = \mu(b_0,0,b_1,a_0,a_2)$.
    At both parameter tuples, the derivative of $\mu$ is of maximal rank $4$, and so is the neural tangent kernel.
    However, in general, $K_{(a_0,0,a_2,b_0,b_1)} \neq K_{(b_0,0,b_1,a_0,a_2)}$, even for fixed $\delta$-invariant.
    One concrete balanced (i.e., $\delta=0$) example is $(a_0,a_2)=(1,2)$ and $(b_0,b_1)=(2,1)$. In this case, 
    \begin{align*}
        %\small
        K_{(a_0,0,a_2,b_0,b_1)} = \left( \begin{array}{c c c c c}
         5 & 0 & 4 & 0 & 0 \\
         0 & 4 & 0 & 2 & 0 \\
         4 & 0 & 10 & 0 & 4 \\
         0 & 2 & 0 & 1 & 0 \\
         0 & 0 & 4 & 0 & 5
    \end{array} \right)  \text{ and } 
        K_{(b_0,0,b_1,a_0,a_2)} = \left( \begin{array}{c c c c c}
         5 & 0 & 4 & 0 & 0 \\
         0 & 1 & 0 & 2 & 0 \\
         4 & 0 & 10 & 0 & 4 \\
         0 & 2 & 0 & 4 & 0 \\
         0 & 0 & 4 & 0 & 5
    \end{array} \right)
    \end{align*}

    If $(a_0,a_2)$ and $(b_0,b_1)$ are linearly dependent, they are the same up to scaling. This means that, up to scaling, $v$ has only one preimage under $\mu$.
    That preimage  $(a_0,0,a_2,b_0,b_1)$ is a critical point of the map $\mu$, i.e., the rank of the neural tangent kernel $K_{(a_0,0,a_2,b_0,b_1)}$ is less than the generic rank $4$.
    $\hfill \diamondsuit$
\end{example}

\begin{example}\label{ex:varyingDelta}
We consider the same network architecture as in Example~\ref{ex:running}.
Suppose $v \in \mathcal{M}$ is such that $v_0$ and $v_1$ are not equal to 0. 
Such a $v$ is in particular a smooth point of the neuromanifold $\mathcal{M}$.
We show that for different $\delta$ we obtain different $K^{(\delta)}(v)$.

We fix filters $a$ and $b$ such that  $\mu(a,b) = v$. 
Our assumptions on $v$ imply that $a_0 \neq 0$ and $a_1 \neq 0$. 
By Example~\ref{ex:running}, 
we have $K^{(\delta)}(v) = K_{(\lambda a, \frac{1}{\lambda} b)}$, 
where 
$\lambda^2 = \frac{-\delta + \sqrt{\delta^2+4\Vert a \Vert^2 \Vert b \Vert^2}}{2\Vert a \Vert^2 }$. 
In particular, the entry of $K^{(\delta)}(v)$ at position $(2,1)$ is equal to $\lambda^2 a_0 a_1 \neq 0$.
The function which to $\delta \in \mathbb{R}$ associates $\frac{-\delta + \sqrt{\delta^2+4\Vert a \Vert^2 \Vert b \Vert^2}}{2\Vert a \Vert^2 }$ is strictly decreasing and therefore bijective.
Hence, distinct  $\delta$'s yield distinct values for $\lambda^2$ and thus for the $(2,1)$-entry of $K^{(\delta)}(v)$.
$\hfill \diamondsuit$
\end{example}

\subsubsection{Proof of Proposition \ref{prop:exlude0} and Example}

To prove Proposition~\ref{prop:exlude0}, we start by rewriting the squared-error loss in terms of filters instead of  convolutions.
Under the assumption $N \geq k$, it is shown in \cite[Corollary~7.2]{kohn2024function}
that, for almost all input training data $X_1, \ldots, X_N \in \mathbb{R}^{d_0}$, 
minimizing $\ell_\mathcal{D}(\alpha)$ is equivalent to minimizing
\begin{align*}
    \ell_{A,u}(w) := (w-u)^\top A (w-u),
\end{align*}
where $w$ is the filter of the convolution $\alpha$, 
$u$ is a data filter that depends on the training data $\mathcal{D}$,
and $A$ is a symmetric positive-definite matrix that depends on the input training data $X_1,\ldots,X_N$.
We denote the corresponding loss in parameter space by $\mathcal{L}_{A,u} := \ell_{A,u} \circ \mu$.

When the data $\mathcal{D}$ is generic, the vector $Au$ is generic as well (see \cite[Section~7]{kohn2024function}). 
We use this to prove the  technical Lemma~\ref{lem:technical}, that is based on the following standard construction from algebraic geometry. 
Given a $k$-dimensional vector space $V$, its projectivization $\mathbb{P}(V)$ is a $(k-1)$-dimensional projective space.
The projectivization $\mathbb{P}(V^\ast)$ of the dual vector space $V^\ast$ consists of all the hyperplanes in  $\mathbb{P}(V)$.
Given a subvariety $X \subseteq \mathbb{P}(V)$, we say that a hyperplane $H \subseteq \mathbb{P}(V)$ is tangent to $X$ if there is a smooth point $x \in X$ such that $H$ contains the embedded tangent space $T_x X \subseteq \mathbb{P}(V)$.
Thinking of the hyperplanes in $\mathbb{P}(V)$ as points of the dual projective space $\mathbb{P}(V^\ast)$, the set of hyperplanes that are tangent to $X$ is a subset of $\mathbb{P}(V^\ast)$.
Its Zariski closure is a proper subvariety of $\mathbb{P}(V^\ast)$, called the \emph{dual variety} of $X$~[Chapter~1]\cite{gkz}.

\begin{lemma}
\label{lem:technical}
    Let $\theta = (w_1, \ldots, w_H)$ with $w_i \in \mathbb{R}^{k_i} \setminus \{ 0 \}$, and let $N \geq k$.
    For almost all $N$-tuples of data $\mathcal{D}$, the vector $Au$ is not orthogonal to the image of the derivative $d_\theta \mu$.
\end{lemma}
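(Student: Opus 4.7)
The plan is to show that $V := \mathrm{im}(d_\theta \mu)^\perp \subseteq \mathbb{R}^k$ is a proper linear subspace, and then use the polynomial dependence of $Au$ on the data to conclude that $Au$ generically avoids $V$.

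First I would establish that $\mathrm{im}(d_\theta\mu) \neq \{0\}$. By Observation~\ref{obs:K}, this image contains every vector of the form $\mu(\dot w_1, w_2, \ldots, w_H)$ for $\dot w_1 \in \mathbb{R}^{k_1}$. Translating through the isomorphism \eqref{eq:polynomialMultiplication}, such a vector corresponds to the polynomial product $\pi_{t_1}(\dot w_1) \pi_{t_2}(w_2) \cdots \pi_{t_H}(w_H)$; since $\mathbb{R}[x,y]$ is an integral domain and all $w_i$ are nonzero by hypothesis, this product is nonzero whenever $\dot w_1 \neq 0$. Hence $V$ is a proper linear subspace of $\mathbb{R}^k$.

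Next I would invoke the structure of $Au$ recalled just before the lemma: $Au$ depends polynomially on $\mathcal{D}$ (see \cite[Section~7]{kohn2024function}). Since the condition $Au \in V$ is linear in $Au$, hence polynomial in $\mathcal{D}$, the set $\{\mathcal{D} \mid Au \in V\}$ is Zariski-closed in $(\mathbb{R}^{d_0} \times \mathbb{R}^{d_H})^N$, and it suffices to exhibit a single $\mathcal{D}$ with $Au \notin V$ to conclude that the bad set has Lebesgue measure zero. To this end, I would fix $X_1, \ldots, X_N$ generic enough that $A$ is invertible (possible since $N \geq k$, cf.\ \cite[Corollary~7.2]{kohn2024function}), and then vary the $Y_i$ freely. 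Writing $\alpha(X) = M(X) w$ for a matrix $M(X)$ linear in $X$, one has $Au = \sum_i M(X_i)^\top Y_i$; the invertibility of $A = \sum_i M(X_i)^\top M(X_i)$ forces the block matrix $[M(X_1)^\top \mid \cdots \mid M(X_N)^\top]$ to have full row rank $k$, so the linear map $(Y_1, \ldots, Y_N) \mapsto Au$ is surjective onto $\mathbb{R}^k$, and we may certainly choose $(Y_i)$ with $Au \notin V$.

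The main delicate point is the nondegeneracy step $\mathrm{im}(d_\theta \mu) \neq \{0\}$, which crucially exploits the polynomial-factorization identity \eqref{eq:polynomialMultiplication} together with the hypothesis that each $w_i$ is nonzero; without these, $V$ could equal all of $\mathbb{R}^k$ and the statement would be vacuous. Everything else is a routine combination of a rank computation and the fact that a proper Zariski-closed subset of Euclidean space has Lebesgue measure zero.
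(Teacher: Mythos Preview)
Your argument is correct for the lemma as stated, and it is more elementary than the paper's. You reduce everything to two linear-algebra facts: (i) since each $w_i\neq 0$, the image of $d_\theta\mu$ is nonzero (either via the integral-domain argument you give, or equivalently via $d_\theta\mu(\theta)=H\mu(\theta)\neq 0$ as the paper notes), so $V=\mathrm{im}(d_\theta\mu)^\perp$ is a proper subspace; and (ii) for fixed generic inputs $X_i$ with $A$ positive definite, the linear map $(Y_i)\mapsto Au=\sum_i M(X_i)^\top Y_i$ is onto $\mathbb{R}^k$, so one can choose data with $Au\notin V$, whence the Zariski-closed bad set has measure zero.

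The paper instead argues via projective duality. It stratifies parameter space by the rank $r=\mathrm{rank}(d_\theta\mu)$, and observes that every hyperplane orthogonal to $\mathrm{im}(d_\theta\mu)$ lies in the dual variety of the corresponding $\mathbb{P}Y_r$; since dual varieties are proper subvarieties of the dual space, a generic $Au$ misses them all. The payoff of this heavier machinery is uniformity: the exceptional set of data is the finite union, over the finitely many possible ranks $r$, of preimages of the dual varieties of $\mathbb{P}Y_r$, and this set does not depend on the particular $\theta$. Your exceptional set $\{\mathcal{D}:Au\in V_\theta\}$ does depend on $\theta$. For the lemma exactly as stated (with $\theta$ fixed before the data) this is immaterial. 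But in the application in Proposition~\ref{prop:exlude0} the point $\theta$ is the limit of gradient flow and hence depends on $\mathcal{D}$ and the initialization, so one really wants the exceptional set of data to be chosen once and for all, independently of $\theta$; the paper's dual-variety proof delivers that, while your proof would need an additional argument to upgrade to the uniform statement.
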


\begin{proof}
    Let us first consider the case that $\mu(\theta)$ is a smooth point of the neuromanifold $\mathcal{M}$.
    Then, the image of $d_\theta \mu$ is the tangent space of $\mathcal{M}$ at $\mu(\theta)$.
    A vector $v=(v_0, \ldots, v_{k-1})$ is orthogonal to this tangent space if and only if the hyperplane 
    $P_v := \{ (h_0, \ldots, h_{k-1}) \mid \sum_i v_i h_i=0 \}$ contains the tangent space.
    Due to the genericity of the data $\mathcal{D}$, the vector $Au$ and the hyperplane $P_{Au}$ are generic. 
    Since the dual variety of the Zariski closure $\mathbb{P}\overline{\mathcal{M}}$ of the projective neuromanifold $\mathbb{P}\mathcal{M}$ is a proper subvariety of $(\mathbb{P}^{k-1})^\ast$, the generic hyperplane $P_{Au}$, considered as a point in the dual space $(\mathbb{P}^{k-1})^\ast$, is not contained in the dual variety of $\mathbb{P}\overline{\mathcal{M}}$.
    Hence, the hyperplane $P_{Au}$ is not tangent to $\mathbb{P}\overline{\mathcal{M}}$, and so the vector $Au$ is not orthogonal 
    to $T_{\mu(\theta)} \mathcal{M} = \mathrm{im}(d_\theta \mu)$.
    
    Now, we consider  arbitrary $\theta$ where all layer filters are nonzero.
    We denote by $r$ the rank of the derivative $d_\theta \mu$.
    Since $d_\theta \mu (\theta) = H \cdot  \mu(\theta) \neq 0$, we have that $r \geq 1$.
    We further denote by $X_r$ the set of all filter tuples $\theta'$ such that the rank of $d_{\theta'}\mu$ is $r$, and by $Y_r$ the Zariski closure of $\mu(X_r)$.
    Note that $Y_{\dim \mathcal{M}} = \overline{\mathcal{M}}$.
    If $\mu(\theta)$ is a smooth point of $Y_r$, then $\mathrm{im}(d_\theta \mu)$ is the tangent space $T_{\mu(\theta)} Y_r$.
    Otherwise, since $\mathrm{im}(d_\theta \mu)$ has dimension $r$, this linear space is contained in the Zariski closure of the set of all tangent spaces at smooth points of $Y_r$.
    Therefore, the dual variety of $\mathbb{P} Y_r$ contains all hyperplanes that contain the $r$-dimensional $\mathrm{im}(d_\theta \mu)$.
    Since, as above, the hyperplane $P_{Au}$ is a generic point in $(\mathbb{P}^{k-1})^\ast$, it is not contained in the dual variety of $\mathbb{P} Y_r$.
    Thus, it does not contain $\mathrm{im}(d_\theta \mu)$, and so $Au$ is not orthogonal to $\mathrm{im}(d_\theta \mu)$.
\end{proof}

\begin{proof}[Proof of Proposition \ref{prop:exlude0}]
    For generic weight initialization, the $\delta_i$ (for $i = 1, \ldots, H-1$) are generic as well, and also the $\sum_{i=k}^{l} \delta_i$.
    Since the invariants \eqref{eq:InvConvolution} stay constant during gradient flow, at the point $\theta = (w_1, \ldots, w_H)$ of convergence, at most one of the filters $w_l$ can be zero. Indeed, if two filters $w_k$ and $w_l$ are equal to zero then $\sum_{i=k}^{l} \delta_i$ has to be zero, which is not the case because of genericity. 
    If none of the filters is zero, we are done. 
    Therefore, we assume now that precisely one of the filters, say $w_l$, is zero.

    We now consider the Hessian matrix $\mathcal{H}$ at $\theta$ of the loss function $\mathcal{L}_{A,u}$ that is equivalent to the squared-error loss.
    The Hessian matrix consists of blocks, each corresponding to a pair of filters $(w_i,w_j)$.
    We denote these blocks by $\mathcal{H}_{(w_i,w_j)}$.
    Those blocks encode bilinear functions 
    \begin{align*}
        d^2_{w_i,w_j} \mathcal{L}_{A,u}: (\dot w_i, \dot w_j) \mapsto 
        2 \left( \mu_i(\dot w_i)^\top A \mu_j (\dot w_j) - \mu_{i,j}(\dot w_i, \dot w_j)^\top A u \right),
    \end{align*}
    where $\mu_i(\dot w_i) = \mu(w_1, \ldots, w_{i-1}, \dot w_i, w_{i+1}, \ldots, w_H)$, $\mu_j$ is analogously defined, and similarly $\mu_{i,j}$ substitutes both $w_i$ and $w_j$ by $\dot w_i$ and $\dot w_j$ (if $i=j$, then $\mu_{i,i}=0$).
    In the case that neither $i$ nor $j$ are equal to $l$, then our assumption $w_l=0$ implies that $d^2_{w_i,w_j} \mathcal{L}_{A,u}=0$. Thus, also the corresponding Hessian block $\mathcal{H}_{(w_i,w_j)}$ is zero. 

    Now we show that not all Hessian blocks with $j=l$ and $i \neq l$ are zero. 
    Those blocks encode bilinear forms  $d^2_{w_i,w_l} \mathcal{L}_{A,u} = -2 \mu_{i,l}^\top Au$. 
    We assume for contradiction that all those forms are zero.
    We pick a nonzero filter $c \in \mathbb{R}^{k_l}$ and consider the map $\tilde{\mu}: (w'_1, \ldots, w'_{l-1}, w'_{l+1}, \ldots, w'_H) \mapsto \mu(w'_1, \ldots, w'_{l-1}, c, w'_{l+1}, \ldots, w'_H)$.
    We also write $\tilde{\theta}$ for the filter tuple that is obtained from $\theta$ by omitting $w_l$.
    Then, $d_{\tilde{\theta}} \tilde{\mu}
    (\dot w_1, \ldots, \dot w_{l-1}, \dot w_{l+1}, \ldots, \dot w_H)^\top Au =   \sum_{i \neq l} \mu_{i,l}(\dot w_i,c)^\top Au = 0$ for all $\dot w_i \in \mathbb{R}^{k_i}$.
    This means that the vector $Au$ is orthogonal to the image of the derivative $d_{\tilde{\theta}} \tilde{\mu}$.
    However, this contradicts Lemma~\ref{lem:technical} applied to the $(H-1)$-layer network parametrization map $\tilde{\mu}$.

    Therefore, one of the blocks $\mathcal{H}_{(w_i, w_l)}$ with $i \neq l$ has a non-zero entry.
    We denote the coordinates of that entry in the Hessian matrix $\mathcal{H}$ by $(s,t)$ (such that the index $t$ corresponds to the filter $w_l$).
    We consider the $s$-th and $t$-th standard basis vectors $e_s$ and $e_t$ of $\mathbb{R}^{\sum_{j=1}^H k_j}$, and the vector $v = \alpha e_s + e_t$ for some scalar $\alpha \in \mathbb{R}$.
    We have 
    \begin{align*}
    v^\top \mathcal{H} v = \alpha^2 \mathcal{H}_{ss} + 2 \alpha \mathcal{H}_{st} + \mathcal{H}_{tt} 
    = 2 \alpha \mathcal{H}_{st} + \mathcal{H}_{tt}.
\end{align*}
Since $\mathcal{H}_{st} \neq 0$, we can choose $\alpha$ such that $v^\top \mathcal{H} v < 0$. 
 Hence, $\theta$ is a critical point for which the Hessian has at least one negative eigenvalue. This is called a \emph{strict saddle},
and we know from \cite{bah2022learning} that, for almost all initializations, gradient flow avoids strict saddles. Thus, $\theta$ with one of the layer filters equal to zero could not have been the point of convergence.
\end{proof}

\begin{example}\label{ex:somestridesequaltoone}
We change the previous running example to both layers having stride  one.
Given two layers $a=(a_0,a_1,a_2)$ and $b=(b_0,b_1)$, the end-to-end filter is
$\mu(a,b) = (a_0b_0, a_0b_1+a_1b_0, a_1b_1+a_2b_0, a_2b_1)$.
It corresponds to a cubic polynomial 
$a_0b_0x^3 + (a_0b_1+a_1b_0)x^2y + (a_1b_1+a_2b_0)xy^2 + a_2b_1y^3$ 
that factors into a quadratic term $a_0 x^2 + a_1 xy + a_2 y^2$ and a linear one $b_0 x + b_1 y$.
Every cubic polynomial has such a factorization, so the neuromanifold is $\mathcal{M} = \mathbb{R}^4$.
Cubic polynomials with only one real root have precisely one such factorization (up to scaling), while cubics with three distinct real roots have three distinct factorizations (up to scaling).

For instance, the filter $v = (0,1,1,0)$ that corresponds to the cubic $x^2y + xy^2 = xy(x+y)$ has the following factorizations according to the network architecture:
\begin{enumerate}
    \item  $a = \sqrt[4]{2} \cdot (0,1,0)$ and $b= \frac{1}{\sqrt[4]{2}} \cdot (1,1)$
    \item $a = \frac{1}{\sqrt[4]{2}} \cdot(1,1,0)$ and $b=\sqrt[4]{2} \cdot(0,1)$
    \item $a = \frac{1}{\sqrt[4]{2}} \cdot(0,1,1)$ and $b = \sqrt[4]{2} \cdot(1,0)$
\end{enumerate}
Each of these filter pairs has been scaled such that $\Vert b \Vert^2 = \Vert a \Vert^2$, i.e., $\delta = 0$.

The neural tangent kernel for this network architecture is
\begin{align*} %\small
    K_{a,b} = \left(
\begin{array}{cccc}
   a_0^2+b_0^2  & a_0 a_1 +  b_0 b_1 & a_0a_2 & 0 \\
   a_0 a_1 +  b_0 b_1  & a_0^2+a_1^2+b_0^2+b_1^2 & a_0a_1 + a_1a_2 + b_0b_1 & a_0a_2
   \\ 
   a_0a_2 & a_0a_1 + a_1a_2 + b_0b_1 & a_1^2+a_2^2+b_0^2+b_1^2 & a_1 a_2 + b_0b_1
   \\ 0 & a_0a_2 & a_1 a_2 + b_0b_1 & a_2^2+b_1^2
\end{array}
    \right).
\end{align*}
For the three factorizations of $v = (0,1,1,0)$ above, the matrices $K_{a,b}$ are
\begin{align*}%\small
  \frac{1}{\sqrt{2}}  \left( \begin{array}{cccc}
        1 & 1 & 0 & 0 \\
        1 & 4 & 1 & 0 \\
        0 & 1 & 4 & 1 \\
        0 & 0 & 1 & 1
    \end{array} \right), \quad 
    \frac{1}{\sqrt{2}}  \left( \begin{array}{cccc}
        1 & 1 & 0 & 0 \\
        1 & 4 & 1 & 0 \\
        0 & 1 & 3 & 0 \\
        0 & 0 & 0 & 2
    \end{array} \right), \quad \text{and} \quad 
    \frac{1}{\sqrt{2}}  \left( \begin{array}{cccc}
        2&0&0&0 \\
        0&3&1&0 \\
        0&1&4&1 \\
        0&0&1&1
    \end{array} \right),
\end{align*}
showing that the neural tangent kernel does not only depend on the end-to-end filter $v$ and the invariant $\delta$.
 $\hfill \diamondsuit$
\end{example}

\subsection{Proofs of Section \ref{sec:higherDim}}

\subsubsection{Proof of Lemma \ref{lem:polynomialMultiplication}}
\begin{proof}
    We abbreviate \eqref{eq:higherDimensionalConvolution} by $(\alpha_{w,\boldsymbol{s}}(X))_{\boldsymbol{i}} = \sum_{\boldsymbol{j}=\boldsymbol{0}}^{\boldsymbol{k}-\boldsymbol{1}} w_{\boldsymbol{j}} X_{\boldsymbol{i}\boldsymbol{s}+\boldsymbol{j}}$.
    The assertion follows inductively from composing two such convolutions:
    \begin{align*}       (\alpha_{w_2,\boldsymbol{s}_2}\circ \alpha_{w_1,\boldsymbol{s}_1})(X)_{\boldsymbol{i}} &= \sum_{\boldsymbol{j}=\boldsymbol{0}}^{\boldsymbol{k}_2-\boldsymbol{1}} w_{2,\boldsymbol{j}} \alpha_{w_1,\boldsymbol{s}_1}(X)_{\boldsymbol{i} \boldsymbol{s}_2 + \boldsymbol{j}} 
        = \sum_{\boldsymbol{j}=\boldsymbol{0}}^{\boldsymbol{k}_2-\boldsymbol{1}} w_{2,\boldsymbol{j}}  \sum_{\boldsymbol{l}=\boldsymbol{0}}^{\boldsymbol{k}_1-\boldsymbol{1}} w_{1,\boldsymbol{l}} 
        X_{(\boldsymbol{i} \boldsymbol{s}_2 + \boldsymbol{j})\boldsymbol{s}_1+\boldsymbol{l}} \\
        &= \sum_{\boldsymbol{m}=0}^{(\boldsymbol{k}_2-\boldsymbol{1})\boldsymbol{s}_1+\boldsymbol{k}_1-\boldsymbol{1}}
        v_{\boldsymbol{m}} X_{\boldsymbol{i} \boldsymbol{s}_2\boldsymbol{s}_1+\boldsymbol{m}},
    \end{align*}
    where  
    $ I_{\boldsymbol{m}} := \{ (\boldsymbol{j},\boldsymbol{l}) \mid \boldsymbol{0} \leq \boldsymbol{j} < \boldsymbol{k}_2, \; \boldsymbol{0} \leq \boldsymbol{l} < \boldsymbol{k}_1,  \;   \boldsymbol{j}\boldsymbol{s}_1+\boldsymbol{l}=\boldsymbol{m} \}$ and $v_{\boldsymbol{m}} := \sum_{(\boldsymbol{j},\boldsymbol{l}) \in I_{\boldsymbol{m}}} w_{2,\boldsymbol{j}}w_{1,\boldsymbol{l}}$.
    Note that $v_{\boldsymbol{m}}$ coincides with the coefficient of the polynomial $\pi_{\boldsymbol{s}_1}(w_2)\pi_{\boldsymbol{1}}(w_1)$, where the degrees of the variables $y_1, \ldots, y_D$ are $m_1, \ldots, m_D$.
\end{proof}

\subsubsection{Proof of Corollary \ref{cor:birationalHighDimensions}}
\begin{proof}
    Our assumption means that, for every layer $l$, at least two variable pairs $(x_{m_1},y_{m_1})$ and $(x_{m_2},y_{m_2})$ appear in the polynomials in $\mathbb{R}[(\boldsymbol{x},\boldsymbol{y})^{\boldsymbol{t}_l}]_{\boldsymbol{k}_l-\boldsymbol{1}}$ that correspond to the $l$-th layer filter.
    Hence, the polynomials in $\mathbb{R}[(\boldsymbol{x},\boldsymbol{y})^{\boldsymbol{t}_l}]_{\boldsymbol{k}_l-\boldsymbol{1}}$ are generically irreducible. (This is a standard fact from algebraic geometry, following from Bertini's theorem; see the proof of \cite[Corollary~4.16]{kohn2022geometry} for a formal argument.)
    Therefore, for a generic filter tuple $\theta=(w_1, \ldots, w_H)$, the unordered set of filters  $\{w_1, \ldots, w_H \}$ can be recovered (up to scalars) from the end-to-end filter $\mu(\theta)$ by computing the factorization of $\pi_{\boldsymbol{1}}(\mu(\theta))$ into its irreducible factors $\{ \pi_{\boldsymbol{t}_1}(w_1), \ldots, \pi_{\boldsymbol{t}_H}(w_H) \}$.
\end{proof}

\subsubsection{Proof of Proposition \ref{prop:correctScalingHighDimension}}
\begin{proof}
    The same proof as for Proposition~\ref{prop:correctScaling} applies.
\end{proof}

\subsubsection{Proof of Theorem \ref{thm:mainHigherDimension}}
\begin{proof}
    We see from Lemma~\ref{lem:polynomialMultiplication} that every fiber of the projectivization $\mu_\mathbb{P}$ of $\mu$ is finite.
    Moreover, Corollary~\ref{cor:birationalHighDimensions} says that, after possibly modding out some permutations of the input layers (cf. Observation~\ref{obs:swap}), the map $\mu_\mathbb{P}$ becomes birational (i.e., its generic fiber is a singleton).
    For such a map, a standard fact from algebraic geometry~\cite[Lemma~3.2]{kohn2017secants} tells us that the image of $\mu_\mathbb{P}$ (over the complex numbers) is smooth precisely at those points that have a single preimage under $\mu_\mathbb{P}$ (modulo the necessary layer permutations) and where the derivative of $\mu_\mathbb{P}$ at that preimage has maximal rank.  
    In other words,
    for every end-to-end filter $v$ in the smooth locus of $\mathcal{M}$, there is a unique filter tuple $\theta$, up to scalars and possibly some layer permutations, that parametrizes $v$; moreover, $d_\theta\mu$ has maximal rank.
    By Proposition~\ref{prop:correctScalingHighDimension}, the invariants $\delta$ fix the scalars in the filter tuple $\theta$, up to signs. 
    The signs and the possible layer permutations do not affect the neural tangent kernel $K_\theta$, by Observations~\ref{obs:K} and~\ref{obs:swap}.
    Finally, the proofs of Lemma~\ref{lem:submersion} and Corollary~\ref{cor:RiemannianMetric} hold verbatim.
\end{proof}

\subsection{Example and Proof of Section \ref{sec:fullyConn}}
\begin{example}\label{ex:fullyconnected}
    For $\Delta_1=diag(0,15/4)$, the neural tangent kernel is parameter-dependent.
    Indeed, let the end-to-end product matrix be equal to the identity $W=I$. Consider $V_2=\operatorname{diag}(1,2)$ and $V_1=\operatorname{diag}(1,1/2)$. Hence $V_2V_1=I_2=W$ and the balancedness constant matrix is indeed equal to $V_2^TV_2 - V_1 V_1^T = \operatorname{diag}(0,15/4) = \Delta_1$.

Consider then $U_2 = \left( \begin{array}{c c}
    0 & 2 \\
    1 & 0
\end{array} \right)$ and $U_1 = \left(\begin{array}{c c}
    0 & 1 \\
    1/2 & 0
\end{array} \right)$, we have again $U_2U_1=I_2$ and $U_2^TU_2 - U_1 U_1^T = \operatorname{diag}(0,15/4)=\Delta_1$. However, using the formula \eqref{NTK-fully-general} for the NTK in this case, we have $ K^{(\Delta_1)}_{V_2,V_1} = \nabla_W \ell(W) V_1^T V_1 + V_2 V_2^T \nabla_W \ell(W)  \neq \nabla_W \ell(W) U_1^T  U_1 + U_2 U_2^T \nabla_W \ell(W) = K^{(\Delta_1)}_{U_2,U_1}$. 
%Therefore, because it $W$ follows the ODE \eqref{eq:ODEfullyconnectedlinearnet}, For $K^{(\Delta_1)}_{U_2,U_1} \neq K^{(\Delta_1)}_{V_2,V_1} $. 
Hence the neural tangent kernel is parameter-dependent. Using Remark \ref{rk:map algebraicity}, we conclude that this is the case for almost all $\Delta_1$. We can also easily generalize this example to any depth and width of the network.
\end{example}

\subsubsection{Proof of Theorem \ref{thm:fullyconnected}}
\begin{proof}
    Consider the map $\mu: GL_d(\mathbb{R})^H \to GL_d(\mathbb{R})$ defined by $\mu(W_H,\ldots,W_1) = W_H \cdots W_1$. Then the fibers of a generic $W$ can be written as $$(W_HG_{H-1},G_{H-1}^{-1}W_{H-1}G_{H-2},\ldots,G_{2}^{-1}W_{2}G_{1},G_{1}^{-1}W_{1}),$$ where $(W_H, \ldots, W_1)$ are any fixed matrices such that their product is equal to $W$.
Let us prove that the fiber such that all the layers are balanced leads then to all $G_i$ being orthogonal. It was shown in \cite{arora2018optimization} that we have for all balanced weights $(W'_H,\ldots,W'_1)$ in the fiber of $W$, for all $j \in \{1,\ldots,H\}$ the following equations $$ W'_H \cdots W'_j (W'_H \cdots W'_j)^T = (WW^T)^{\frac{H-j+1}{H}}.$$
%Therefore, we have $j \in \{1,\ldots,H-1\}$.
Let $(W_H,\ldots,W_1)$ be balanced matrices in the fiber of $W$, and $G_1,\ldots,G_{H-1}$ invertible matrices such that $(W_HG_{H-1},G_{H-1}^{-1}W_{H-1}G_{H-2},\ldots,G_{2}^{-1}W_{2}G_{1},G_{1}^{-1}W_{1})$ are also balanced matrices (in the fiber of $W$).
We have, for  $j \in \{1,\ldots,H-1\}$,
\begin{align*}
    W_H \cdots W_{j+1} G_j G_j^T (W_H \cdots W_{j+1})^T & = (WW^T)^{\frac{H-j}{H}} \\ 
    \text{ and } \quad\quad W_H \cdots W_{j+1} (W_H \cdots W_{j+1})^T & =(WW^T)^{\frac{H-j}{H}}.
\end{align*}
%where $(W_H,\cdots,W_1)$ are any balanced matrix in the fiber of $W$.
Hence by unicity of the symmetric PSD root of a symmetric PSD matrix we have \begin{align*}
    W_H \cdots W_{j+1} G_j G_j^T (W_H \cdots W_{j+1})^T = W_H \cdots W_{j+1} (W_H \cdots W_{j+1})^T.
\end{align*} 
Therefore, since all weight matrices are invertible we obtain $G_jG_j^T=I$ so that the $G_j$, $j=1,\hdots,H-1$ are orthogonal. Conversely if some weight layers are balanced and in the fiber of $W$, then one can easily check that for $G_j$ orthogonal, the layers are also balanced.
\end{proof}

\end{document}